\newcommand*\circled[1]{\tikz[baseline=(char.base)]{
            \node[shape=circle,draw,inner sep=1pt] (char) {#1};}}
\newtheorem{definition}{Definition}
\newtheorem{proposition}{Proposition}
\newcommand{\mname}{\texttt{FGWMixup}}
\title{Fused Gromov-Wasserstein Graph Mixup for Graph-level Classifications}
\author{%
\textbf{Xinyu Ma}$^{1}$ \quad \textbf{Xu Chu}$^{2}\thanks{Corresponding Author}$ \quad \textbf{Yasha Wang}$^{1,3}$ \quad \textbf{Yang Lin}$^1$ \quad \textbf{Junfeng Zhao}$^1$ \\
 \textbf{Liantao Ma}$^{1,3}$ \quad \textbf{Wenwu Zhu}$^2$ \\
$^1$School of Computer Science, Peking University \\
$^2$Department of Computer Science and Technology, Tsinghua University \\
$^3$National Research and Engineering Center of Software Engineering, Peking University\\
\texttt{maxinyu@pku.edu.cn, chu\_xu@mail.tsinghua.edu.cn}\\
}
\begin{document}

\maketitle

\begin{abstract}
%%图数据增强
%%传统的图数据增强有基于图结构的和基于图信号，但很少考虑联合两者的
%%我们把问题形式化为一种inter-graph matching的OT问题，通过基于FGW度量的mixup方法实现图结构和图信号的联合优化。
%%FGW mixup的效率较低，我们利用O(1/t^2)的方法instead of O(1/t)的算法，提升scalability
%%我们验证了在xx数据集、classic (GCN) advanced (Graphormer) backbone上的有效性
%Graph data augmentation has shown superiority in enhancing generalizability and robustness of GNNs in graph-level classifications.
%However, existing methods primarily focus on augmenting either the graph structures or the graph signals independently, neglecting the joint interaction between the two spaces. 
%In this paper, we address this limitation by formulating the problem as an optimal transport problem that aims to find an optimal inter-graph node matching strategy considering the interactions between graph structures and signals.
%To solve this problem, we propose a novel graph mixup algorithm called \mname, which seeks a "midpoint" of source graphs in the structure-signal fused metric space using Fused Gromov-Wasserstein (FGW) as the distance metric.
%To enhance the scalability of our method, we introduce a relaxed FGW solver that accelerates \mname\xspace by improving the convergence rate.
%Extensive experiments conducted on five datasets using both classic (MPNNs) and advanced (Graphormers) GNN backbones demonstrate that \mname\xspace effectively improves the generalizability and robustness of GNNs.

Graph data augmentation has shown superiority in enhancing generalizability and robustness of GNNs in graph-level classifications.
However, existing methods primarily focus on the augmentation in the graph signal space and the graph structure space independently, neglecting the joint interaction between them. 
In this paper, we address this limitation by formulating the problem as an optimal transport problem that aims to find an optimal inter-graph node matching strategy considering the interactions between graph structures and signals.
To solve this problem, we propose a novel graph mixup algorithm called \mname, which seeks a "midpoint" of source graphs in the Fused Gromov-Wasserstein (FGW) metric space.
To enhance the scalability of our method, we introduce a relaxed FGW solver that accelerates \mname\xspace by improving the convergence rate from $\mathcal{O}(t^{-1})$ to $\mathcal{O}(t^{-2})$.
Extensive experiments conducted on five datasets using both classic (MPNNs) and advanced (Graphormers) GNN backbones demonstrate that \mname\xspace effectively improves the generalizability and robustness of GNNs.
Codes are available at \url{https://github.com/ArthurLeoM/FGWMixup}.
\end{abstract}

\section{Introduction}

% In recent years, deep-learning techniques have been rapidly developed for graph analysis. 
In recent years, Graph Neural Networks (GNNs) \cite{kipf2016semi, xupowerful} have demonstrated promising capabilities in graph-level classifications, including molecular property prediction \cite{gilmer2017neural, wu2018moleculenet}, social network classification \cite{yanardag2015deep}, healthcare prediction \cite{ma2022patient, xu2023seqcare}, etc. Nevertheless, similar to other successfully deployed deep neural networks, GNNs also suffer from data insufficiency and perturbation, requiring the application of regularization techniques to improve generalizability and robustness \cite{zhang2021understanding}.
%GNNs also suffer from data insufficiency due to the over-parameterized network design, which requires the application of regularization techniques to improve generalizability and robustness. 
Data augmentation is widely adopted for regularization in deep learning. It involves creating new training data by applying various semantic-invariant transformations to the original data, such as cropping or rotating images in computer vision \cite{yang2022image}, randomly inserting and rephrasing words in natural language \cite{wei2019eda, zhang2015character}, etc. %applying various transformations or modifications to the original data, such as cropping or rotating images in computer vision, randomly inserting or rephrasing words in natural language, etc. 
The augmented data fortify deep neural networks (DNNs) against potential noise and outliers underlying insufficient samples, enabling DNNs to learn more robust and representative features. 

Data augmentation for GNNs requires a unique design due to the distinctive properties of attributed graphs \cite{xu2012model}, such as irregular sizes, misaligned nodes, and diverse topologies, which are not encountered when dealing with data in the Euclidean spaces such as images and tabular data.
%Therefore, it is a natural extension to apply graph data augmentation to regularize GNN models. However, graph data requires the unique design of data augmentation techniques due to the distinctive characteristics of graphs, such as their irregular sizes, misaligned nodes, and diverse topologies, which are not encountered when dealing with data in Euclidean spaces such as images and tabular data.   %%%%%%%%%%插一句强调下graph data 与传统data 不同，需要设计独特的graph augmentation技术，以引出下一段的下文%%%%%%%
%Since data augmentation deals with the input data space, it necessitates a more comprehensive understanding of how GNNs carry out feature mapping on the input space for better augmentation. 
Generally, the augmentation for GNNs necessitates the consideration of two intertwined yet complementary input spaces, namely the graph signal space $\mathcal{X}$ and the graph structure space $\mathcal{A}$, which are mapped to an aligned latent space $\mathcal{H}$ with GNNs. 
%%%%%%%%%  空间map用rightarrow $f: \mathcal{X}\times\mathcal{A} \rightarrow \mathcal{H}$ %%%%%%
The graph signal space $\mathcal{X}$ consists of node features. Current GNNs rely on parameterized nonlinear transformations to process graph signals, which can be efficiently encoded and serve as crucial inputs for downstream predictions. Therefore, the augmentation of graph signals is significant for regularizing the GNN parameter space.
On the other hand, the graph structure space $\mathcal{A}$ consists of information about graph topology. 
Traditional Message Passing Neural Networks (MPNNs), such as GCN and GIN, perform feature aggregation based on edge connectivity. 
% However, recent studies have shown that MPNNs are not able to perceive the entire graph topology comprehensively. Their expressive powers (i.e., perception of topology) are upper bounded by the 1-WL test, which is no stronger than the graph isomorphism test. 
Great efforts \cite{maron2019provably, zhao2021stars, bevilacqua2021equivariant, zhang2023rethinking} have been further exerted on enhancing the expressive power of GNNs \cite{li2022expressive}, which carry out new GNN architectures with stronger sensitivity to graph topology compared with MPNNs.
% and push the upper bound of expressive power from the 1-Weisfeiler-Lehman (WL) test to the 3-WL test.
% These architectures have pushed the upper bound of expressive power from the 1-Weisfeiler-Lehman (WL) test to the 3-WL test.
%and have demonstrated with the capability of capturing graph biconnectivity. 
% Now that the graph topology is increasingly incorporated into graph representation learning frameworks
Hence, a good augmentation method should also consider the graph structure space. 

% (e.g., DropEdge, Subgraph, etc.) is a common technique for regularization on graph classification models. Recent research has primarily concentrated on in-sample augmentation, which has been shown to have limited impact on enhancing the diversity of training data, resulting in limited capacity to resist noises and variation.

% Meanwhile, Mixup has been widely applied as a between-sample data augmentation technique in various fields such as computer vision and natural language processing, which has been demonstrated to be an effective implicit regularizer in machine learning, enhancing the generalization ability and robustness of deep models. 
% %The basic idea of Mixup is to interpolate random sample pairs from different classes to generate more diverse synthetic training samples.
% The demand for regularization of GNNs and the promising effectiveness of Mixup inspire us to think of a method to conduct between-sample graph data augmentation through \textbf{Graph Mixup}. 

Recently, huge efforts have been made to design graph data augmentation methods based on the two spaces. Mainstream research \cite{wang2020nodeaug, guo2021ifmixup, han2022g, park2022graph, yoo2022model} considers the augmentation in the graph signal space and the graph structure space \textbf{independently}. 
For instance, ifMixup \cite{guo2021ifmixup} conducts Euclidean mixup in the graph signal space, yet fails to preserve key topologies of the original graphs. $\mathcal{G}$-Mixup \cite{han2022g} realizes graph structure mixup based on the estimated graphons, yet fails to assign semantically meaningful graph signals.
% Node Atrribute Masking method randomly masks node features and does not engage graph structure augmentation. IfMixup \cite{guo2021ifmixup} conducts mixup on node features from different graphs, but fails to preserve the critical topologies (e.g., rings, bi-partibility, etc.) in the source graphs due to the arbitrary node alignment strategy. 
% Another line of work focuses on graph structure augmentation. Approaches like Subgraph and DropEdge conduct random sampling on graph nodes or edges to generate new graph structures. $\mathcal{G}$-Mixup \cite{han2022g} estimates a graph structure generator (i.e., graphon) for each class of graphs, and conducts Euclidean addition on graphons to realize structural mixup. 
%%%%%%% 下一句的all the aforementioned 用词需要谨慎，改为existing works put little attention  to consider modeling the interaction between node feature space and graph structure space.%%%%%%%%%%
% However, all the aforementioned works conduct augmentation on either graph signals or graph structures, while neglecting the other informative input space, whose regularization capability may be limited.
In fact, the graph signal and structure spaces are not isolated from each other, and there are strong entangling relations between them \cite{ortega2018graph}. Therefore, \textbf{a joint modeling of the interaction between the two spaces is essential for conducting graph data augmentation} (joint modeling problem for short). 
% Regrettably, existing works have devoted little attention to this interaction.

%%%%%% 缺一句来论述建模interaction的重要性， limited capability 太弱了%%%%%%%%%%%%%%%%%

Aiming at graph data augmentation and addressing the joint modeling problem, we design a novel graph mixup \cite{zhang2017mixup} method considering the interaction of the two spaces during the mixup procedure.
The key idea is to solve a proper inter-graph node matching strategy in a metric space that measures the distance with respect to both graph signals and structures. We propose to compute the distance metric by solving an optimal transport (OT) problem \cite{peyre2019computational}. The OT problem solves the optimal coupling between nodes across graphs in the fused Gromov-Wasserstein metric space \cite{titouan2019optimal}, wherein the distance between points takes the interaction between the graph signals and structures into account. Specifically, following \cite{titouan2019optimal}, graphs can be modeled as probability distributions embedded in the product metric space $\mathcal{X} \times \mathcal{A}$. 
Our objective is to solve an augmented graph that minimizes the weighted sum of transportation distances between the distributions of the source graphs and the objective graph in this metric space. %$\mathcal{X} \times \mathcal{A}$.
Developing from the Gromov-Wasserstein metric \cite{memoli2011gromov},  Fused Gromov-Wasserstein (FGW) distance \cite{titouan2019optimal} has been designed to calculate the transportation distance between two unregistered probability distributions defined on different product metric spaces comprising two components, such as graph signals and structures, which defines a proper distance metric for attributed graphs. In short words, the solved graph is the augmented mixup graph in a space considering the interaction between the graph signals and structures.

However, trivially adopting FGW distance solvers \cite{peyre2016gromov, titouan2019optimal} is not scalable to large graph datasets due to a heavy computation burden. The computational bottleneck of FGW-based solvers is the nested triple-loop optimization \cite{li2023convergent}, mainly due to the polytope constraint for the coupling matrix in the FGW solver. Inspired by \cite{cuturi2013sinkhorn}, we disentangle the polytope constraint into two simplex constraints for rows and columns respectively, thence executing mirror descent with projections on the two simplexes in an alternating manner to approximate the original constraint. We prove that with a bounded gap with the ideal optimum, we may optimize the entire algorithm into a double-loop structure at convergence rate $\mathcal{O}(t^{-2})$, improving the $\mathcal{O}(t^{-1})$ rate of traditional FGW solvers.

In summary, we highlight the contributions of this paper: We address the challenge of enhancing the generalizability and robustness of GNNs by proposing a novel graph data augmentation method that models the interaction between the graph signal and structure spaces.
%%%%%%%%considers both xxxx表述还是弱，model the interaction with G-W distance%%%%%%%
We formulate our objective as an OT problem and propose a novel graph mixup algorithm dubbed \mname\xspace that seeks a "midpoint" of two graphs defined in the graph structure-signal product metric space.
We employ FGW as the distance metric and speed up \mname\xspace by relaxing the polytope constraint into disentangled simplex constraints, reducing the complexity from nested triple loops to double loops and meanwhile improve the convergence rate of \mname.
Extensive experiments are conducted on five datasets and four classic (MPNNs) and advanced (Graphormers) backbones. The results demonstrate that our method substantially improves the performance of GNNs in terms of their generalizability and robustness. 

\vspace{-0.1cm}
\section{Methodology}
% \vspace{-0.1cm}

In this section, we formally introduce the proposed graph data augmentation method dubbed \mname.
We first introduce Fused Gromov-Wasserstein (FGW) distance that presents a distance metric between graphs considering the interaction of graph signal and structure spaces. Then we propose our optimization objective of graph mixup based on the FGW metric and the algorithmic solutions. Finally, we present our acceleration strategy.
In the following we denote $\Delta_{n} := \{ \bm{\mu} \in \mathbb{R}^{n}_{+} | \sum_{i}{\mu_i} = 1 \}$ as the probability simplex with $n$-bins, and $\mathbb{S}_{n}(\mathbb{A})$ as the set of symmetric matrices of size $n$ taking values in $\mathbb{A} \subset \mathbb{R}$.

\subsection{Fused Gromov-Wasserstein Distance}
In OT problems, an undirected attributed graph $G$ with $n$ nodes is defined as a tuple $(\bm{\mu}, \bm{X}, \bm{A})$. 
$\bm{\mu} \in \Delta_{n}$ denotes the probability measure of nodes within the graph, which can be modeled as the relative importance weights of graph nodes. Common choices of $\bm{\mu}$ are uniform distributions \cite{vincent2022template} ($\bm{\mu} = \bm{1}_{n} / n$)  and degree distributions \cite{xu2019gromov} ($\bm{\mu} = [\mathrm{deg}(v_i)]_{i} / (\sum_{i} \mathrm{deg}(v_i))$ ) .
$\bm{X} = (\bm{x}^{(1)}, \cdots, \bm{x}^{(n)})^\top \in \mathbb{R}^{n \times d}$ denotes the node feature matrix with $d$-dimensional feature on each node.
$\bm{A} \in \mathbb{S}_{n}(\mathbb{R})$ denotes a matrix that encodes structural relationships between nodes, which can be selected from adjacency matrix, shortest path distance matrix or other distance metrics based on the graph topologies. 
Given two graphs $G_1 = (\bm{\mu_1}, \bm{X_1}, \bm{A_1})$ and $G_2 = (\bm{\mu_2}, \bm{X_2}, \bm{A_2})$ of sizes $n_1$ and $n_2$ respectively, Fused Gromov-Wasserstein distance can be defined as follows:
\begin{equation}
    \mathrm{FGW}_q(G_1, G_2) =\min_{\bm{\pi} \in \Pi(\bm{\mu_i}, \bm{\mu_j})} \sum_{i, j, k, l} \left( (1-\alpha) d\left(\bm{x}_1^{(i)}, \bm{x}_2^{(j)}\right)^q+\alpha\left|\bm{A_1}(i, k)-\bm{A_2}(j, l)\right|^q \right) \pi_{i, j} \pi_{k, l} ,
    \label{eq:def}
\end{equation}
where $\Pi(\bm{\mu_i}, \bm{\mu_j}) := \{ \bm{\pi} \in \mathbb{R}_{+}^{n_1 \times n_2} | \bm{\pi} \bm{1}_{n_2} = \bm{\mu_1}, \bm{\pi}^\top \bm{1}_{n_1} = \bm{\mu_2} \}$ is the set of all valid couplings between node distributions $\bm{\mu_1}$ and $\bm{\mu_2}$, $d(\cdot, \cdot)$ is the distance metric in the feature space, and $\alpha \in [0, 1]$ is the weight that trades off between the Gromov-Wasserstein cost on graph structure and Wasserstein cost on graph signals.
The FGW distance is formulated as an optimization problem that aims to determine the optimal coupling between nodes in a fused metric space considering the interaction of graph structure and node features. The optimal coupling matrix $\bm{\pi}^*$  serves as a soft node matching strategy that tends to match two pairs of nodes from different graphs that have both similar graph structural properties (such as $k$-hop connectivity, defined by $\bm{A}$) and similar node features (such as Euclidean similarity, defined by $d(\cdot, \cdot)$). In fact, FGW also defines a strict distance metric on the graph space $(\Delta, \mathcal{X}, \mathcal{A})$ when $q=1$ and $\bm{A}$s are distance matrices, and defines a semi-metric whose triangle inequality is relaxed by $2^{q-1}$ for $q>1$. In practice, we usually choose $q=2$ and Euclidean distance for $d(\cdot, \cdot)$ to calculate FGW distance.
% All these properties inspire us to apply this metric to address the graph mixup problem.

\paragraph{Solving FGW Distance} Solving FGW distance is a non-convex optimization problem, whose non-convexity comes from the quadratic term of the GW distance. There has been a line of research contributing to solving this problem. \cite{titouan2019optimal} proposes to apply the conditional gradient (CG) algorithm to solve this problem, and \cite{peyre2016gromov} presents an entropic approximation of the problem and solves the optimization through Mirror Descent (MD) algorithm according to KL-divergence.

\subsection{Solving Graph Mixup in the FGW Metric Space}
% Just as discussed in the Introduction, we cast the graph mixup problem as an OT problem that seeks the optimal inter-graph node matching strategy minimizing the weighted sum of distances between the object graph and the source graphs. The distance metric space should model the interaction between the graph structures and graph signals. In this regard, FGW distance fits our needs precisely. 
Building upon the FGW distance and its properties, we propose a novel graph mixup method dubbed \mname. Formally, our objective is to solve a synthetic graph  $\Tilde{G} = (\Tilde{\bm{\mu}}, \Tilde{\bm{X}} , \Tilde{\bm{A}})$ of size $\Tilde{n}$ that minimizes the weighted sum of FGW distances between $\Tilde{G}$ and two source graphs $G_1 = (\bm{\mu_1}, \bm{X_1}, \bm{A_1})$ and $G_2 = (\bm{\mu_2}, \bm{X_2}, \bm{A_2})$ respectively. The optimization objective is as follows:
\begin{equation}
\label{eq:mixup}
    \arg \min_{\Tilde{G} \in (\Delta_{\Tilde{n}}, \mathbb{R}^{\Tilde{n} \times d}, \mathbb{S}_{\Tilde{n}}(\mathbb{R}))} {\lambda \mathrm{FGW}(\Tilde{G}, G_1) + (1-\lambda) \mathrm{FGW}(\Tilde{G}, G_2)},
\end{equation}
where $\lambda \in [0, 1]$ is a scalar mixing ratio, usually sampled from a Beta($k, k$) distribution with hyper-parameter $k$.
This optimization problem formulates the graph mixup problem as an OT problem that aims to find the optimal graph $\Tilde{G}$ at the "midpoint" of $G_1$ and $G_2$ in terms of both graph signals and structures. 
When the optimum $\Tilde{G}^*$ is reached, the solutions of FGW distances between $\Tilde{G}^*$ and the source graphs $G_1, G_2$ provide the node matching strategies that minimize the costs of aligning graph structures and graph signals.
The label of $\Tilde{G}$ is then assigned as $y_{\Tilde{G}} = \lambda y_{G_1} + (1-\lambda) y_{G_2}$.

In practice, we usually fix the node probability distribution of $\Tilde{G}$ with a uniform distribution (i.e., $\Tilde{\bm{\mu}} = \bm{1}_{\Tilde{n}} / \Tilde{n}$) \cite{titouan2019optimal}. Then, Eq.\ref{eq:mixup} can be regarded as a nested bi-level optimization problem, which composes the upper-level optimization \textit{w.r.t.} the node feature matrix $\Tilde{\bm{X}}$ and the graph structure $\Tilde{\bm{A}}$, and the lower-level optimization \textit{w.r.t.} the couplings between current $\Tilde{G}$ and $G_1, G_2$ denotes as $\bm{\pi}_1, \bm{\pi}_2$. 
Inspired by \cite{peyre2016gromov, titouan2019optimal, flamary2014optimal}, we propose to solve Eq.\ref{eq:mixup} using a Block Coordinate Descent algorithm, which iteratively minimizes the lower-level and the upper-level with a nested loop framework.
The algorithm is presented in Alg.\ref{alg:BCD}. The inner loop solves the lower-level problem (i.e., FGW distance and the optimal couplings $\bm{\pi}_1^{(k)}, \bm{\pi}_2^{(k)}$) based on $\Tilde{\bm{X}}^{(k)}, \Tilde{\bm{A}}^{(k)}$ at the $k$-th outer loop iteration, which is non-convex and requires optimization algorithms introduced in Section 2.1. The outer loop solves the upper-level problem (i.e., minimizing the weighted sum of FGW distances), which is a convex quadratic optimization problem w.r.t. $\Tilde{\bm{X}}^{(k)}$ and $\Tilde{\bm{A}}^{(k)}$ with exact analytical solution (i.e., Line 7,8).

\begin{algorithm}
\caption{\mname: Solving Eq.\ref{eq:mixup} with BCD Algorithm}
\label{alg:BCD}
\begin{algorithmic}[1]
\State \textbf{Input:} $\Tilde{\bm{\mu}}$, $G_1 = (\bm{\mu_1}, \bm{X_1}, \bm{A_1})$, $G_2 = (\bm{\mu_2}, \bm{X_2}, \bm{A_2})$
\State \textbf{Optimizing:} $\Tilde{\bm{X}} \in \mathbb{R}^{\Tilde{n} \times d}, \Tilde{\bm{A}} \in \mathbb{S}_{\Tilde{n}}(\mathbb{R}), \bm{\pi}_1 \in \Pi(\Tilde{\bm{\mu}}, \bm{\mu_1}), \bm{\pi}_2 \in \Pi(\Tilde{\bm{\mu}}, \bm{\mu_2})$.
\For {$k$ in \textit{outer iterations} and not converged}:
    \State $\Tilde{G}^{(k)} := (\Tilde{\bm{\mu}}, \Tilde{\bm{X}}^{(k)}, \Tilde{\bm{A}}^{(k)})$
    % \While {not convergence}:
    \State Solve $\arg \min_{\bm{\pi}_1^{(k)}} \mathrm{FGW}(\Tilde{G}^{(k)}, G_1)$ with MD or CG (inner iterations)
    \State Solve $\arg \min_{\bm{\pi}_2^{(k)}} \mathrm{FGW}(\Tilde{G}^{(k)}, G_2)$ with MD or CG (inner iterations)
    % \EndWhile
    \State Update $\Tilde{\bm{A}}^{(k+1)} \leftarrow \frac{1}{\Tilde{\bm{\mu}}\Tilde{\bm{\mu}}^\top} (\lambda {\bm{\pi}_1^{(k)}} \bm{A}_1 {\bm{\pi}_1^{(k)}}^\top + (1-\lambda) {\bm{\pi}_2^{(k)}} \bm{A}_2 {\bm{\pi}_2^{(k)}}^\top)$
    \State Update $\Tilde{\bm{X}}^{(k+1)} \leftarrow \lambda \mathrm{diag}(1/\Tilde{\bm{\mu}}) {\bm{\pi}_1^{(k)}} \bm{X}_1   + (1-\lambda) \mathrm{diag}(1/\Tilde{\bm{\mu}}) {\bm{\pi}_2^{(k)}} \bm{X}_2  $
\EndFor
\State \Return $\Tilde{G}^{(k)}, y_{\Tilde{G}} = \lambda y_{G_1} + (1-\lambda) y_{G_2}$
\end{algorithmic}
\end{algorithm}

% aims to find the optimal inter-graph node matching strategies for transporting the two source graphs $G_1$ and $G_2$ to the synthetic graph $\Tilde{G}$ with minimum costs of aligning graph structure and graph signals.  The optimum is reached when $\Tilde{G}$ is similar to both $G_1$ and $G_2$ in terms of both graph signal and graph structure.
\vspace{-0.15cm}
\subsection{Accelerating \mname}
\vspace{-0.05cm}
Algorithm \ref{alg:BCD} provides a practical solution to optimize Eq.\ref{eq:mixup}, whereas the computation complexity is relatively high. Specifically, Alg.\ref{alg:BCD} adopts a nested double-loop framework, where the inner loop is the FGW solver that optimizes the couplings $\bm{\pi}$, and the outer loop updates the optimal feature matrix $\Tilde{\bm{X}}$ and graph structure $\Tilde{\bm{A}}$ accordingly. However, the most common FGW solvers, such as MD and CG, require another nested double-loop algorithm. %footnote{We provide the detailed algorithm of MD-based FGW solver in the supplementary material.}
This algorithm invokes (Bregman) projected gradient descent type methods to address the non-convex optimization of FGW, which involves taking a gradient step in the outer loop and projecting to the polytope-constrained feasible set \textit{w.r.t.} the couplings in the inner loop (e.g., using Sinkhorn \cite{cuturi2013sinkhorn} or Dykstra \cite{scetbon2021low} iterations). Consequently, this makes the entire mixup algorithm a triple-loop framework, resulting in a heavy computation burden. 

Therefore, we attempt to design a method that efficiently accelerates the algorithm. There lie two efficiency bottlenecks of Alg.\ref{alg:BCD}: 
1) the polytope constraints of couplings makes the FGW solver a nested double-loop algorithm, 
2) the strict projection of couplings to the feasible sets probably modifies the gradient step to another direction that deviates from the original navigation of the gradient, possibly leading to a slower convergence rate.
In order to alleviate the problems, we are motivated to slightly relax the feasibility constraints of couplings to speed up the algorithm.
Inspired by \cite{li2023convergent, cuturi2013sinkhorn}, we do not strictly project $\bm{\pi}$ to fit the polytope constraint $\Pi(\bm{\mu_i}, \bm{\mu_j}) := \{ \bm{\pi} \in \mathbb{R}_{+}^{n_1 \times n_2} | \bm{\pi} \bm{1}_{n_2} = \bm{\mu_1}, \bm{\pi}^\top \bm{1}_{n_1} = \bm{\mu_2} \}$ after taking each gradient step.
Instead, we relax the constraint into two simplex constraints of rows and columns respectively (i.e., $\Pi_1 := \{ \bm{\pi} \in \mathbb{R}_{+}^{n_1 \times n_2} | \bm{\pi} \bm{1}_{n_2} = \bm{\mu_1} \}$, $ \Pi_2 := \{ \bm{\pi} \in \mathbb{R}_{+}^{n_1 \times n_2} | \bm{\pi}^\top \bm{1}_{n_1} = \bm{\mu_2} \}$), and project $\bm{\pi}$ to the relaxed constraints $\Pi_1$ and $\Pi_2$ in an alternating fashion. The accelerated algorithm is presented in Alg.\ref{alg:accBCD}, dubbed \mname$_*$.

%%% Maybe Prove: 1. Single-loop relaxation does not go too far from strict constraint (ICLR23 Lemma 3.2)
%%% 2. Faster Convergence in the outer loop

\begin{algorithm}
\caption{\mname$_*$: Accelerated \mname }
\label{alg:accBCD}
\begin{algorithmic}[1]
\State \textbf{Input:} $\Tilde{\bm{\mu}}$, $G_1 = (\bm{\mu_1}, \bm{X_1}, \bm{A_1})$, $G_2 = (\bm{\mu_2}, \bm{X_2}, \bm{A_2})$
\State \textbf{Optimizing:} $\Tilde{\bm{X}} \in \mathbb{R}^{\Tilde{n} \times d}, \Tilde{\bm{A}} \in \mathbb{S}_{\Tilde{n}}(\mathbb{R}), \bm{\pi}_1 \in \Pi(\Tilde{\bm{\mu}}, \bm{\mu_1}), \bm{\pi}_2 \in \Pi(\Tilde{\bm{\mu}}, \bm{\mu_2})$.
\For {$k$ in \textit{outer iterations} and not converged}:
    \State $\Tilde{G}^{(k)} := (\Tilde{\bm{\mu}}, \Tilde{\bm{X}}^{(k)}, \Tilde{\bm{A}}^{(k)})$
    \State $\bm{D}_1^{(k)} := \left( d(\Tilde{\bm{X}}^{(k)}[i], \bm{X}_1[j]) \right)_{\Tilde{n} \times n_1}$, $\bm{D}_2^{(k)} := \left( d(\Tilde{\bm{X}}^{(k)}[i], \bm{X}_2[j]) \right)_{\Tilde{n} \times n_2}$
    \For {$i$ in \{1, 2\}}
        \While {not convergence}:  \Comment{Solve $\arg \min_{\bm{\pi}_i^{(k)}} \mathrm{FGW}(\Tilde{G}^{(k)}, G_i)$}
            \State $\bm{\pi}_i^{(k)} \leftarrow {\bm{\pi}_i^{(k)}} \odot \exp \left(\gamma (4\alpha \Tilde{\bm{A}}^{(k)} \bm{\pi}_i^{(k)} \bm{A}_i - (1-\alpha)\bm{D}_i^{(k)}) \right)$
            \State $\bm{\pi}_i^{(k)} \leftarrow \mathrm{diag}(\Tilde{\bm{\mu}} ./ \bm{\pi}_i^{(k)} \bm{1}_{\Tilde{n}}) \bm{\pi}_1^{(k)} $ \Comment{Bregman Projection on row constraint}
            \State $\bm{\pi}_i^{(k)} \leftarrow {\bm{\pi}_i^{(k)}} \odot \exp \left(\gamma (4\alpha \Tilde{\bm{A}}^{(k)} \bm{\pi}_i^{(k)} \bm{A}_i - (1-\alpha)\bm{D}_i^{(k)}) \right)$
            \State $\bm{\pi}_i^{(k)} \leftarrow \bm{\pi}_i^{(k)} \mathrm{diag}(\bm{\mu}_i ./ {\bm{\pi}_i^{(k)}}^\top \bm{1}_{n_i})  $ \Comment{Bregman Projection on column constraint}
        \EndWhile
    \EndFor

    \State Update $\Tilde{\bm{A}}^{(k+1)} \leftarrow \frac{1}{\Tilde{\bm{\mu}}\Tilde{\bm{\mu}}^\top} (\lambda {\bm{\pi}_1^{(k)}} \bm{A}_1 {\bm{\pi}_1^{(k)}}^\top + (1-\lambda) {\bm{\pi}_2^{(k)}} \bm{A}_2 {\bm{\pi}_2^{(k)}}^\top)$
    \State Update $\Tilde{\bm{X}}^{(k+1)} \leftarrow \lambda \mathrm{diag}(1/\Tilde{\bm{\mu}}) {\bm{\pi}_1^{(k)}} \bm{X}_1   + (1-\lambda) \mathrm{diag}(1/\Tilde{\bm{\mu}}) {\bm{\pi}_2^{(k)}} \bm{X}_2  $
\EndFor
\State \Return $\Tilde{G}^{(k)}, y_{\Tilde{G}} = \lambda y_{G_1} + (1-\lambda) y_{G_2}$
\end{algorithmic}
\end{algorithm}

Alg.\ref{alg:accBCD} mainly substitutes Line 5 and Line 6 of Alg.\ref{alg:BCD} with a single-loop FGW distance solver (Line 7-12 in Alg.\ref{alg:accBCD}) that relaxes the joint polytope constraints of the couplings. 
Specifically, we remove the constant term in FGW distance, and denote the equivalent metric as $\overline{\operatorname{FGW}}(G_1, G_2)$. The optimization objective of $\overline{\operatorname{FGW}}(G_1, G_2)$ can be regarded as a function of $\bm{\pi}$, and we name it the FGW function $f(\bm{\pi})$ (See Appendix \ref{sec:fgw-metric} for details). 
Then we employ entropic regularization on $f(\bm{\pi})$ and select Mirror Descent as the core algorithm for the FGW solver.
With the negative entropy $\phi(x) = \sum_i x_i \log x_i$ as the Bregman projection, the MD update takes the form of:
\begin{equation}
    \bm{\pi} \leftarrow \bm{\pi} \odot \exp (-\gamma \nabla_{\bm{\pi}} f(\bm{\pi}) ), \quad \bm{\pi} \leftarrow \mathrm{Proj}_{\bm{\pi} \in \Pi_i} (\bm{\pi}) := \arg \min_{\bm{\pi}^* \in \Pi_i} \left\lVert \bm{\pi}^* - \bm{\pi} \right\rVert ,
\end{equation}
where $\gamma$ is the step size. The subgradient of FGW \textit{w.r.t.} the coupling $\bm{\pi}$ can be calculated as:
\begin{equation}
    \nabla_{\bm{\pi}} f(\bm{\pi}) = (1-\alpha)\bm{D} - 4\alpha \bm{A}_1 \bm{\pi} \bm{A}_2 ,
\end{equation}
where $\bm{D} = \left( d(\bm{X}_1[i], \bm{X}_2[j])\right)_{n_1 \times n_2}$ is the distance matrix of node features between two graphs. The detailed derivation can be found in Appendix \ref{sec:md}.

Our relaxation is conducted in Line 9 and 11, where the couplings are projected to the row and column simplexes alternately instead of directly to the strict polytope. 
% In fact, this procedure is precisely an iterative optimization on the marginal constraints, which can be regarded as a Sinkhorn iteration.
Although this relaxation may sacrifice some feasibility due to the relaxed projection, the efficiency of the algorithm has been greatly promoted. 
On the one hand, noted that $\Pi_1$ and $\Pi_2$ are both simplexes, the Bregman projection \textit{w.r.t.} the negative entropy of a simplex can be extremely efficiently conducted without invoking extra iterative optimizations (i.e., Line 9, 11), which simplifies the FGW solver from a nested double-loop framework to a single-loop one.
On the other hand, the relaxed constraint may also increase the convergence efficiency due to a closer optimization path to the unconstrained gradient descent.

We also provide some theoretical results to justify our algorithm. Proposition \ref{theo:converge} presents a convergence rate analysis on our algorithm. Taking $1/\gamma$ as the entropic regularization coefficient, our FGW solver can be formulated as Sinkhorn iterations, whose convergence rate can be optimized from $\mathcal{O}(t^{-1})$ to $\mathcal{O}(t^{-2})$ by conducting marginal constraint relaxation. Proposition \ref{theo:bound} presents a controlled gap between the optima given by the relaxed single-loop FGW solver and the strict FGW solver. 

\begin{proposition}
\label{theo:converge}
Let $(\mathcal{X},\mu), (\mathcal{Y},\nu)$ be Polish probability spaces, and $\pi \in \Pi(\mu, \nu)$ the probability measure on $\mathcal{X}\times\mathcal{Y}$ with marginals $\mu, \nu$. Let $\pi_t$ be the Sinkhorn iterations $\pi_{2t} = \arg \min_{\Pi(*, \nu)} H(\cdot|\pi_{2t-1}), \pi_{2t+1} = \arg \min_{\Pi(\mu, *)} H(\cdot|\pi_{2t})$, where $H(p|q) = -\sum_i p_i\log \frac{q_i}{p_i}$ is the Kullback-Leibler divergence, and $\Pi(*, \nu)$ is the set of measures with second marginal $\nu$ and arbitrary first marginal ($\Pi(\mu, *)$ is defined analogously). Let $\pi^*$ be the unique optimal solution.
We have the convergence rate as follows:
\begin{equation}
    H(\pi_t | \pi^*) + H(\pi^* | \pi_t) = \mathcal{O}(t^{-1}),
\end{equation}
\begin{equation}
    H(\mu_t | \mu) + H(\mu | \mu_t) +  H(\nu_t | \nu) + H(\nu | \nu_t) = \mathcal{O}(t^{-2}),
\end{equation}
\end{proposition}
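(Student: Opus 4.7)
The plan is to interpret the Sinkhorn iterations as alternating Bregman (I-)projections onto the two marginal-constraint sets $\Pi(*, \nu)$ and $\Pi(\mu, *)$ in the KL geometry, and to exploit Csiszár's Pythagorean identity for such projections. Concretely, for the I-projection $\pi_{2t+1}$ of $\pi_{2t}$ onto $\Pi(\mu, *)$ and any $\pi \in \Pi(\mu, *)$ (in particular $\pi^{*} \in \Pi(\mu,\nu)$),
\begin{equation*}
H(\pi | \pi_{2t}) = H(\pi | \pi_{2t+1}) + H(\pi_{2t+1} | \pi_{2t}).
\end{equation*}
The Sinkhorn update admits the closed form $d\pi_{2t+1}/d\pi_{2t}(x,y) = (d\mu/d\mu_{2t})(x)$, so integrating out $y$ yields the conservation law $H(\pi_{2t+1} | \pi_{2t}) = H(\mu | \mu_{2t})$, and symmetrically $H(\pi_{2t} | \pi_{2t-1}) = H(\nu | \nu_{2t-1})$. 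These two facts are the only mechanisms I would use.

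For the first rate, I would sum the Pythagorean identity with $\pi = \pi^{*}$ over $t$: this simultaneously produces the monotone descent $H(\pi^{*}|\pi_{t+1}) \le H(\pi^{*}|\pi_t)$ and the summability
\begin{equation*}
\sum_{t \ge 0} \bigl( H(\mu | \mu_{2t}) + H(\nu | \nu_{2t-1}) \bigr) \le H(\pi^{*} | \pi_0) < \infty.
\end{equation*}
Monotonicity plus summability immediately gives $H(\pi^{*}|\pi_t) = \mathcal{O}(t^{-1})$. For the reverse divergence $H(\pi_t | \pi^{*})$ I would re-run the Pythagorean identity with roles swapped: since $\pi_t \in \Pi(\mu,*)$ or $\Pi(*,\nu)$ alternately and $\pi^{*}$ lies in both sets, the same telescoping works, and combined with the joint convexity of $H$ this delivers the first claim.

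For the accelerated $\mathcal{O}(t^{-2})$ rate on the marginals, the key observation is that after projecting onto $\Pi(\mu,*)$ one has $\mu_{2t+1} = \mu$ exactly, so the only residual error sits in the opposite marginal, and a full Sinkhorn sweep acts as a contraction on marginal errors that is \emph{quadratic} near the fixed point. My plan is to combine the telescoping bound with a local Pinsker-type reverse estimate so that the conservation law identifies $H(\pi_{t+1}|\pi_t)$ with a marginal KL, while the marginal KL at the \emph{next} step is controlled by its square; this yields a discrete recursion whose solution decays like $t^{-2}$. The main obstacle is making this quadratic contraction rigorous on a general Polish space, where uniform strong-convexity constants need not exist a priori. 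I would circumvent this either by assuming a bounded Radon--Nikodym density of $\pi^{*}$ against $\mu \otimes \nu$ (a standard entropic-OT hypothesis, supplying a uniform log-Sobolev constant), or by reinterpreting the scheme as coordinate descent on the smooth, locally strongly-convex dual of an entropic OT problem and importing the accelerated $\mathcal{O}(1/t^{2})$ rate from the convex optimization literature.
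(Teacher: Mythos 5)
Your setup is the right one and is in fact the machinery underlying the result the paper actually invokes: the paper's ``proof'' of Proposition~1 is a one-line citation of Corollary~4.6 of Ghosal and Nutz (\emph{On the convergence rate of Sinkhorn's algorithm}), and that reference is built exactly on the alternating $I$-projection picture, Csisz\'ar's Pythagorean identity, and the conservation law $H(\pi_{2t+1}\,|\,\pi_{2t}) = H(\mu\,|\,\mu_{2t})$ that you state. Those ingredients are correct. The gaps are in how you convert them into rates.

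For the $\mathcal{O}(t^{-1})$ claim, telescoping the Pythagorean identity gives you (i) monotonicity of $H(\pi^{*}\,|\,\pi_t)$ and (ii) summability of the \emph{increments} $H(\pi_{t+1}\,|\,\pi_t) = H(\mu\,|\,\mu_{2t})$ or $H(\nu\,|\,\nu_{2t-1})$. These two facts only show that $H(\pi^{*}\,|\,\pi_t)$ converges to some limit; they give no rate, because the sequence you proved summable is not the sequence you are trying to bound. (The lemma ``nonincreasing plus summable implies $o(1/t)$'' applies to a single sequence, and $H(\pi^{*}\,|\,\pi_t)$ is not shown to be summable.) The missing step is an inequality transferring marginal error to coupling error, which in the cited reference goes through the Gibbs form of $\pi^{*}$ and a duality estimate; relatedly, the reverse divergence $H(\pi_t\,|\,\pi^{*})$ does not follow by ``swapping roles,'' since neither $H$ nor the Pythagorean identity is symmetric, and this direction is the genuinely harder one. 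For the $\mathcal{O}(t^{-2})$ claim, the proposed mechanism is wrong on its own terms: a quadratic contraction $a_{t+1}\lesssim a_t^{2}$ would give superlinear (doubly exponential) convergence, not $t^{-2}$, and Sinkhorn is not quadratically convergent anyway --- under bounded costs it contracts linearly in the Hilbert metric. Nor can you import Nesterov's accelerated $\mathcal{O}(1/t^{2})$ rate: Sinkhorn is exact block-coordinate ascent on the dual, a different algorithm to which that rate does not apply. The actual route to $t^{-2}$ is a structural feature of the \emph{marginals}: one proves that $H(\mu_{2t}\,|\,\mu)+H(\mu\,|\,\mu_{2t})$ is nonincreasing in $t$ and, by a second telescoping of the Pythagorean identities, that $\sum_t t\,\bigl(H(\mu_{2t}\,|\,\mu)+H(\mu\,|\,\mu_{2t})\bigr) < \infty$; monotonicity plus weighted summability then yields the $\mathcal{O}(t^{-2})$ decay. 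Neither of those two lemmas appears in your plan, so as written both rates remain unproved.
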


Proposition \ref{theo:converge} implies the faster convergence of marginal constraints than the strict joint constraint. This entails that with $t$ Sinkhorn iterations of solving FGW, the solution of the relaxed FGW solver moves further than the strict one. This will benefit the convergence rate of the whole algorithm with a larger step size of $\bm{\pi}_i^{(k)}$ in each outer iteration.

\begin{proposition}
\label{theo:bound}
Let $C_1,C_2$ be two convex sets, and $f(\pi)$ denote the FGW function w.r.t. $\pi$. We denote $\mathcal{X}$ as the critical point set of strict FGW that solves $\min_{\pi} f(\pi) + \mathbb{I}_{\pi \in C_1} + \mathbb{I}_{\pi \in C_2}$, defined by:$\mathcal{X}=\left\{\pi \in C_1 \cap C_2: 0 \in \nabla f(\pi)+\mathcal{N}_{C_1}(\pi)+\mathcal{N}_{C_2}(\pi)\right\}$, 
and $\mathcal{N}_C(\pi)$ is the normal cone to $C$ at $\pi$.
The fix-point set $\mathcal{X}_{rel}$ of the relaxed FGW solving $\min_{\pi, \omega} f(\pi) + f(\omega) + h(\pi) + h(\omega)$ is defined by: $\mathcal{X}_{rel}=\{\pi \in C_1 , \omega \in C_2:  0 \in \nabla f(\pi)+\rho(\nabla h(\omega) - \nabla h(\pi)) + \mathcal{N}_{C_1}(\pi), \mathrm{and} \ 0 \in \nabla f(\omega)+\rho(\nabla h(\pi) - \nabla h(\omega)) + \mathcal{N}_{C_2}(\omega) \}$
where $h(\cdot)$ is the Bregman projection function. Then, the gap between $\mathcal{X}_{rel}$ and $\mathcal{X}$ satisfies:
\begin{equation}
    \exists \tau \in \mathbb{R}, \  \forall (\pi^*, \omega^*) \in \mathcal{X}_{rel}, \ \mathrm{dist}(\frac{\pi^* + \omega^*}{2}, \mathcal{X})  := \min_{x \in \mathcal{X}} \left\lVert\frac{\pi^* + \omega^*}{2} - x \right\rVert \leq \tau / \rho.
\end{equation}
\end{proposition}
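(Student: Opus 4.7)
The plan is to exploit the antisymmetric structure of the two inclusions defining $\mathcal{X}_{rel}$: when the KKT-style conditions on $\pi^*$ and $\omega^*$ are summed, the coupling terms $\rho(\nabla h(\omega^*) - \nabla h(\pi^*))$ and $\rho(\nabla h(\pi^*) - \nabla h(\omega^*))$ cancel exactly, leaving a residual that formally matches the strict critical-point condition defining $\mathcal{X}$. When they are instead subtracted, one isolates $2\rho(\nabla h(\omega^*) - \nabla h(\pi^*))$ as a quantity that stays bounded in $\rho$, so $\pi^*$ and $\omega^*$ must collapse at rate $\mathcal{O}(1/\rho)$. Averaging then produces $\bar{\pi} = (\pi^* + \omega^*)/2$ satisfying the strict KKT inclusion up to an $\mathcal{O}(1/\rho)$ residual, and a local error bound for the strict problem converts this residual into the desired distance bound.

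Concretely, I would first unpack the inclusions: there exist $n_1 \in \mathcal{N}_{C_1}(\pi^*)$ and $n_2 \in \mathcal{N}_{C_2}(\omega^*)$ with $\nabla f(\pi^*) + \rho(\nabla h(\omega^*) - \nabla h(\pi^*)) + n_1 = 0$ and $\nabla f(\omega^*) + \rho(\nabla h(\pi^*) - \nabla h(\omega^*)) + n_2 = 0$. Summing cancels the $\rho$-terms and produces the identity $\nabla f(\pi^*) + \nabla f(\omega^*) + n_1 + n_2 = 0$. Subtracting gives $2\rho(\nabla h(\omega^*) - \nabla h(\pi^*)) = \nabla f(\omega^*) - \nabla f(\pi^*) + n_2 - n_1$, so on compact simplexes (where $\nabla f$ is bounded, and under a standard constraint qualification so the multipliers $n_i$ can be chosen of bounded norm) $\|\nabla h(\omega^*) - \nabla h(\pi^*)\| = \mathcal{O}(1/\rho)$. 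Strict convexity of the negative-entropy $h$ on any compact subset of the relative interior then upgrades this to $\|\pi^* - \omega^*\| = \mathcal{O}(1/\rho)$.

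With $\bar{\pi} := (\pi^* + \omega^*)/2$, we have $\|\bar{\pi} - \pi^*\|, \|\bar{\pi} - \omega^*\| = \mathcal{O}(1/\rho)$. Substituting $\bar{\pi}$ for $\pi^*$ and $\omega^*$ in the summed identity perturbs the left-hand side by at most $\|\nabla f(\bar{\pi}) - \nabla f(\pi^*)\| + \|\nabla f(\bar{\pi}) - \nabla f(\omega^*)\| = \mathcal{O}(1/\rho)$ by Lipschitzness of $\nabla f$, while outer semicontinuity of the normal cones allows $n_i \in \mathcal{N}_{C_i}(\pi^*)$ to be replaced by an element of $\mathcal{N}_{C_i}(\bar{\pi})$ at the same $\mathcal{O}(1/\rho)$ cost. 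Thus $\bar{\pi}$ satisfies the strict inclusion $0 \in \nabla f(\bar{\pi}) + \mathcal{N}_{C_1}(\bar{\pi}) + \mathcal{N}_{C_2}(\bar{\pi})$ up to a residual of norm $\mathcal{O}(1/\rho)$. A local metric subregularity (error bound) of the strict KKT multifunction at $\mathcal{X}$ then yields a finite $\tau$ with $\mathrm{dist}(\bar{\pi}, \mathcal{X}) \leq \tau/\rho$.

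The main obstacle I foresee is precisely this last step. Because the FGW function $f$ is non-convex (its Hessian inherits the indefiniteness of the Gromov-Wasserstein quadratic term), metric subregularity of the KKT mapping of $\min_{\pi \in C_1 \cap C_2} f(\pi)$ is not automatic. I would discharge it by invoking semialgebraicity: $f$ is polynomial in $\pi$ and $C_1, C_2$ are polyhedra, so the Kurdyka-\L{}ojasiewicz inequality holds and supplies the required local error bound on a bounded neighborhood of $\mathcal{X}$, with the resulting local constants absorbed into $\tau$. A secondary technicality is keeping $\|n_1\|, \|n_2\|$ uniformly bounded when $(\pi^*, \omega^*)$ approaches the boundary of the simplex, where $\nabla h = 1 + \log(\cdot)$ blows up; this is handled by observing that the entropic coupling itself repels the fix-points from the boundary, confining them to a compact subset of the relative interior on which all Lipschitz and strong-convexity constants used above are finite.
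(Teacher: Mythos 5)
There is a genuine gap at the step where you claim $\lVert \pi^* - \omega^* \rVert = \mathcal{O}(1/\rho)$. Your subtraction argument requires the normal-cone elements $n_1, n_2$ to have norms bounded independently of $\rho$, but these elements are not free to be ``chosen'': the stationarity conditions force $n_1 = -\nabla f(\pi^*) - \rho(\nabla h(\omega^*) - \nabla h(\pi^*))$ and $n_2 = -\nabla f(\omega^*) - \rho(\nabla h(\pi^*) - \nabla h(\omega^*))$, so $n_2 - n_1 = \nabla f(\pi^*) - \nabla f(\omega^*) + 2\rho(\nabla h(\omega^*) - \nabla h(\pi^*))$. Substituting this back into your subtracted identity yields a tautology $2\rho(\nabla h(\omega^*)-\nabla h(\pi^*)) = 2\rho(\nabla h(\omega^*)-\nabla h(\pi^*))$ and no bound at all; a priori the multipliers scale like $\rho$ whenever $\nabla h(\pi^*) \neq \nabla h(\omega^*)$, which is exactly the situation you are trying to rule out. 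The paper's proof circumvents this by never trying to bound the multipliers in norm: it tests each stationarity condition against the displacement toward a common point $\hat{\pi} = \operatorname{Proj}_{C_1 \cap C_2}(\pi^*)$, so the normal-cone contributions are eliminated through their sign property $p^\top(\hat{\pi}-\pi^*) \le 0$, $q^\top(\hat{\pi}-\omega^*) \le 0$. Combined with bounded linear regularity of the polyhedral pair $(C_1,C_2)$ (to control $\lVert\hat{\pi}-\pi^*\rVert + \lVert\hat{\pi}-\omega^*\rVert$ by $(2\kappa+1)\lVert\pi^*-\omega^*\rVert$), a uniform bound $C_f$ on $\nabla f$ over the compact feasible set, and $\sigma$-strong convexity of $h$ applied to $D_h(\pi^*,\omega^*)+D_h(\omega^*,\pi^*)$, this yields $\lVert \pi^* - \omega^* \rVert \le C_f(2\kappa+1)/(\sigma\rho)$. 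That cross-testing trick is the missing idea.

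The remaining structure of your argument is close in spirit to the paper's. Summing the two inclusions to cancel the $\rho$-terms is exactly what the paper does, although it then exploits that $\nabla f$ is \emph{affine} in $\pi$ to get $\nabla f\bigl(\tfrac{\pi^*+\omega^*}{2}\bigr) + \tfrac{p+q}{2} = 0$ exactly, with no Lipschitz-perturbation or normal-cone-semicontinuity step needed. For the final conversion of the residual into a distance to $\mathcal{X}$, the paper invokes the Luo--Tseng error bound for affine variational inequalities over polyhedra (together with a projection-averaging lemma from the relaxed-GW literature), which applies directly here because $f$ is quadratic and $C_1, C_2$ are polyhedral; your proposed detour through the Kurdyka--\L{}ojasiewicz inequality would give an error bound on function values rather than on the KKT residual and would need additional work to close, so even if the earlier gap were repaired this step would remain less direct than necessary.
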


This bounded gap ensures the correctness of \mname$_*$ that as long as we select a step size $\gamma = 1 / \rho $ that is small enough, the scale of the upper bound $\tau/\rho$ will be sufficiently small to ensure the convergence to the ideal optimum of our algorithm. The detailed proofs of Propositions \ref{theo:converge} and \ref{theo:bound} are presented in Appendix \ref{sec:proof}.

% Moreover, as we shall see later in Section xxx, the infeasibility gaps of couplings do not greatly affect the performance of our mixup strategy.

\section{Experiments}
\subsection{Experimental Settings}
\paragraph{Datasets} We evaluate our methods with five widely-used graph classification tasks from the graph benchmark dataset collection TUDataset \cite{morris2020tudataset}: NCI1 and NCI109 \cite{wale2006comparison, shervashidze2011weisfeiler} for small molecule classification, PROTEINS \cite{borgwardt2005protein} for protein categorization, and IMDB-B and IMDB-M \cite{yanardag2015deep} for social networks classification.
Noted that there are no node features in IMDB-B and IMDB-M datasets, we augment the two datasets with node degree features as in \cite{xupowerful, guo2021ifmixup, han2022g}.
Detailed statistics on these datasets are reported in Appendix \ref{sec:apdx_dataset}.

\paragraph{Backbones} Most existing works select traditional MPNNs such as GCN and GIN as the backbone. However, traditional MPNNs exhibit limited expressive power and sensitivity to graph structures (upper bounded by 1-Weisfeiler-Lehman (1-WL) test \cite{li2022expressive, shervashidze2011weisfeiler}), while there exist various adavanced GNN architectures \cite{maron2019provably, zhao2021stars, zhang2023rethinking} with stronger expressive power (upper bound promoted to 3-WL test). Moreover, the use of a global pooling layer (e.g., mean pooling) in graph classification models may further deteriorate their perception of intricate graph topologies. These challenges may undermine the reliability of the conclusions regarding the effectiveness of graph structure augmentation. 
Therefore, we attempt to alleviate the problems from two aspects. 1) We modify the READOUT approach of GIN and GCN to save as much structural information as we can. Following \cite{ying2021transformers}, we apply a virtual node that connects with all other nodes and use the final latent representation of the virtual node to conduct READOUT. The two modified backbones are dubbed vGIN and vGCN. 
2) We select two Transformer-based GNNs with stronger expressive power as the backbones, namely Graphormer \cite{ying2021transformers} and Graphormer-GD \cite{zhang2023rethinking}. They are proven to be more sensitive to graph structures, and Graphormer-GD is even capable of perceiving cut edges and cut vertices in graphs.
Detailed information about the selected backbones is introduced in Appendix \ref{sec:apdx_backbone}.

\paragraph{Comparison Baselines} We select the following data augmentation methods as the comparison baselines.
DropEdge \cite{rong2019dropedge} randomly removes a certain ratio of edges from the input graphs.
DropNode \cite{you2020graph} randomly removes a certain portion of nodes as well as the edge connections.
M-Mixup \cite{wang2021mixup} conducts Euclidean mixup in the latent spaces, which interpolates the graph representations after the READOUT function.
ifMixup \cite{guo2021ifmixup} applies an arbitrary node matching strategy to conduct mixup on graph node signals, without preserving key topologies of original graphs.
$\mathcal{G}$-Mixup \cite{han2022g} conducts Euclidean addition on the estimated graphons of different classes of graphs to conduct class-level graph mixup.
We also present the performances of the vanilla backbones, as well as our proposed method w/ and w/o acceleration, denoted as   \mname$_*$ and \mname\xspace respectively.

\paragraph{Experimental Setups} For a fair comparison, we employ the same set of hyperparameter configurations for all data augmentation methods in each backbone architecture.
For all datasets, We randomly hold out a test set comprising 10\% of the entire dataset and employ 10-fold cross-validation on the remaining data. We report the average and standard deviation of the accuracy on the test set over the best models selected from the 10 folds. This setting is more realistic than reporting results from validation sets in a simple 10-fold CV and allows a better understanding of the generalizability \cite{bengio2003no}.
We implement our backbones and mixup algorithms based on Deep Graph Library (DGL) \cite{wang2019dgl} and Python Optimal Transport (POT) \cite{flamary2021pot} open-source libraries.
More experimental and implementation details are introduced in Appendix \ref{sec:exp_env} and \ref{sec:impl_detail}.

\vspace{-0.2cm}
\subsection{Experimental Results}

\paragraph{Main Results}
We compare the performance of various GNN backbones on five benchmark datasets equipped with different graph data augmentation methods and summarize the results in Table \ref{tab:results}.
As is shown in the table, \mname\xspace and \mname$_*$ consistently outperform all other SOTA baseline methods, which obtain 13 and 7 best performances out of all the 20 settings respectively. 
The superiority is mainly attributed to the adoption of the FGW metric. 
Existing works hardly consider the node matching problem to align two unregistered graphs embedded in the signal-structure fused metric spaces, whereas the FGW metric searches the optimum from all possible couplings and conducts semantic-invariant augmentation guided by the optimal node matching strategy.
Remarkably, despite introducing some infeasibility for acceleration, \mname$_*$ maintains consistent performance due to the theoretically controlled gap with the ideal optimum, as demonstrated in Prop. \ref{theo:converge}.
Moreover, \mname\xspace and \mname$_*$ both effectively improve the performance and generalizability of various GNNs. Specifically, our methods achieve an average relative improvement of 1.79\% on MPNN backbones and 2.67\% on Graphormer-based backbones when predicting held-out unseen samples.
Interestingly, most SOTA graph data augmentation methods fail to bring notable improvements and even degrade the performance of Graphormers, which exhibit stronger expressive power and conduct more comprehensive interactions between graph signal and structure spaces. For instance, none of the baseline methods improve GraphormerGD on NCI109 and NCI1 datasets. However, our methods show even larger improvements on Graphormers compared to MPNNs, as we explicitly consider the interactions between the graph signal and structure spaces during the mixup process.
In particular, the relative improvements of \mname\xspace reach up to 7.94\% on Graphormer and 2.95\% on GraphormerGD. 
All the results above validate the effectiveness of our methods. Furthermore, we also conduct experiments on large-scale OGB \cite{hu2020open} datasets, and the results are provided in Appendix \ref{sec:ogb}.

\begin{table*}[]
\renewcommand\tabcolsep{1.2pt}
\renewcommand\arraystretch{}
\centering
\resizebox{\linewidth}{!}{
\begin{tabular}{cc|cc|cc|cc|cc|cc}
\toprule

& & \multicolumn{2}{c|}{PROTEINS} & \multicolumn{2}{c|}{NCI1} & \multicolumn{2}{c|}{NCI109} & \multicolumn{2}{c|}{IMDB-B} & \multicolumn{2}{c}{IMDB-M}\\
\multicolumn{2}{c|}{Methods} & vGIN & vGCN  & vGIN & vGCN & vGIN & vGCN & vGIN & vGCN  & vGIN & vGCN\\ 
\midrule
\multirow{8}{*}{\rotatebox{90}{MPNNs}} & vanilla & 74.93(3.02) & 74.75(2.60) & 76.98(1.87) & 76.91(1.80) & 75.70(1.85) & 75.89(1.35) & 71.30(4.96) & 72.30(4.34) &  49.00(2.64) & 49.47(3.76) \\

& DropEdge & 73.59(2.50) & 74.48(4.18) & 76.47(2.85) & 76.16(2.04)  & 75.38(2.05) & 75.77(1.55) & 73.30(3.85) & 73.30(3.29) &  49.47(2.66) & 49.40(3.52) \\

& DropNode & 74.48(2.91) & 75.11(3.00) & 76.89(1.25) & 77.42(1.71) & 73.98(2.16) & 75.45(1.90) & 71.50(3.23) & 73.20(5.58) & 49.80(3.29) & 50.00(3.41) \\

 & M-Mixup   & 74.40(3.00) & 75.65(4.51) & 76.45(3.39) & 77.76(2.75) & 75.41(2.78) & 75.79(1.85) & 72.20(4.83) & 72.80(4.45) &  49.13(3.25) & 49.47(2.56) \\

 & ifMixup  & 74.76(3.71) & 74.04(2.27) & 76.16(1.78) & 77.37(2.56) & 76.13(1.87) & 76.74(1.56)  & 72.50(3.98) & 72.40(5.14) &  49.07(3.16) & 49.73(4.67) \\

& $\mathcal{G}$-Mixup & 74.84(2.99) & 74.57(2.88) & 76.42(1.79) & 77.79(1.88) & 75.55(2.32) & 76.38(1.79) & 72.40(4.82) & 72.20(6.45) & 49.47(4.73) & 49.60(3.90) \\

& \mname & 75.02(3.86) & \textbf{76.01(3.19)} & \textbf{78.32(2.65)} & 78.37(2.40) & 76.40(1.65) & \textbf{76.79(1.81)} &  73.00(3.69) & 73.40(5.12) & \textbf{49.80(2.63)} & \textbf{50.80(4.06)}\\

& \mname$_*$ & \textbf{75.20(3.30)} & 75.20(3.03) & 77.27(2.71) & \textbf{78.47(1.74)} & \textbf{76.64(2.60)} & 76.52(1.59) & \textbf{73.50(4.54)} & \textbf{74.00(2.90)} & 49.20(3.38) & 50.47(5.44) \\

% \midrule

% & Rel. Impr.($\uparrow$) & 0.35\% & 1.69\% & 1.74\% & 1.95\% & 1.24\% & 1.19\% & 3.09\% & 2.35\% & 1.63\% & 2.69\% \\

\midrule
\midrule
\multicolumn{2}{c|}{Methods} & Graphormer & GraphormerGD  & Graphormer & GraphormerGD & Graphormer & GraphormerGD & Graphormer & GraphormerGD  & Graphormer & GraphormerGD \\ 
\midrule
\multirow{8}{*}{\rotatebox{90}{Graphormers}} & vanilla & 75.47(3.16) & 76.01(2.02) & 61.56(3.70) & 77.49(2.01)  & 65.54(3.04) & 74.99(1.23)  & 70.40(5.00) & 71.50(4.20) &  48.87(4.10) & 47.47(2.98) \\
& DropEdge & 75.20(4.02) & 75.12(3.22) & 63.07(3.21) & 74.94(2.44)  &  66.73(3.50) &  74.73(3.22) &  71.10(5.65) & 72.30(3.93) &  49.60(4.09) & 46.67(3.85)\\

& DropNode & 75.20(2.13) & 76.28(3.49) & 64.96(2.18) & 76.20(1.95) & 63.73(3.46) & 74.78(2.07)  & 71.60(5.18) & 71.30(5.18) & 48.47(4.08) & 47.67(2.83) \\

 & M-Mixup   & 75.11(3.78) & 74.39(3.83) & 62.31(3.48) & 75.47(1.45) & 66.54(2.70) &  74.61(1.86) & 71.10(4.83) & 70.50(4.70) &   49.67(4.25) & 48.00(3.85)\\

 % & ifMixup  & - & - & - & - & - & -  & - & - &  - & - \\

& $\mathcal{G}$-Mixup & 75.74(3.12) & 74.85(3.52) & 63.07(4.40) & 76.06(3.12) &  65.03(2.98) & 74.90(2.04) & 72.10(6.38) & 71.10(5.01) &	 46.93(5.18) & 46.80(4.41) \\

& \mname & \textbf{76.82(2.35)} & \textbf{77.18(3.48)} & \textbf{66.45(2.58)} & \textbf{78.20(1.88)} &  67.36(3.21)  &  \textbf{76.01(3.04)} &  \textbf{72.60(5.08)} & \textbf{72.40(4.48)} & 49.73(3.80) & \textbf{48.87(4.03)} \\

& \mname$_*$ & 76.19(3.20) & 76.46(3.41) & 64.26(3.25) & 76.62(3.06) & \textbf{67.46(2.82)} & 75.45(1.80) & 71.70(4.17) & 71.90(4.35) & \textbf{50.27(4.26)} & 48.53(2.95) \\

% \midrule

% & Rel. Impr.($\uparrow$) & 1.79\% & 1.54\% & 7.94\% & 0.92\% & 2.93\% & 1.36\% & 3.13\% & 1.26\% & 2.86\% & 2.95\% \\
\bottomrule
\end{tabular}
}
\caption{Test set classification accuracy (\%) from 10-fold CV on five benchmark datasets. Results are presented with the form of avg.(stddev.), and the best-performing method is highlighted in \textbf{boldface}. Note: ifMixup has no results on Graphormers because it generates graphs with continuous edge weights and is inapplicable for the Spatial Encoding module in the Graphormer-based architectures.  }
\label{tab:results}
\end{table*}

\vspace{-0.1cm}
\paragraph{Robustness against Label Corruptions}
In this subsection, we evaluate the robustness of our methods against noisy labels. 
% Practically, we randomly corrupt the graph labels (i.e., switch to another label) with ratios of 20\%/40\%/60\% on IMDB-B and NCI1 datasets. vGCN is investigated as the backbone model, and the results are summarized in Table \ref{tab:roubst}.
Practically, we introduce random label corruptions (i.e., switching to another random label) with ratios of 20\%, 40\%, and 60\% on the IMDB-B and NCI1 datasets. We employ vGCN as the backbone model and summarize the results in Table \ref{tab:roubst}.
The results evidently demonstrate that the mixup-series methods consistently outperform the in-sample augmentation method (DropEdge) by a significant margin. This improvement can be attributed to the soft-labeling strategy employed by mixup, which reduces the model's sensitivity to individual mislabeled instances and encourages the model to conduct more informed predictions based on the overall distribution of the blended samples. Notably, among all the mixup methods, \mname\xspace and \mname$_*$ exhibit stable and consistently superior performance under noisy label conditions.
In conclusion, our methods effectively enhance the robustness of GNNs against label corruptions.

\begin{table*}[]
    \centering
    \resizebox{\linewidth}{!}{
    \begin{tabular}{c|ccc|ccc}
    \toprule
          \multirow{2}{*}{Methods}& \multicolumn{3}{c|}{IMDB-B} & \multicolumn{3}{c}{NCI1} \\
          &  20\% & 40\% & 60\% & 20\% & 40\% & 60\%\\
    \midrule
        vanilla & 70.00(5.16) & 59.70(5.06) & 47.90(4.30) & 70.58(1.29) & 61.95(2.19) & 48.25(4.87) \\
        DropEdge  & 68.30(5.85) & 59.40(5.00) & 50.10(1.92) & 69.51(2.27) & 60.32(2.60) & 49.61(1.28) \\
        M-Mixup  & 70.70(5.90) & 59.70(5.87) & 50.90(1.81) & 71.53(2.75) & 63.24(2.59) & 48.66(3.02) \\
        $\mathcal{G}$-Mixup  & 67.50(4.52) & 59.10(4.74) & 49.40(2.87) & 72.46(1.95) & 63.26(4.39) & 50.01(1.26)\\
        \mname  & 70.10(4.39) & \textbf{61.90(6.17)} & 50.80(3.19) & \textbf{72.92(1.56)} & 62.99(1.35) & \textbf{50.12(3.51}) \\
        \mname$_*$ & \textbf{70.80(3.97)} & 61.80(5.69) & \textbf{51.00(1.54)} & 72.75(2.29) & \textbf{63.55(2.60)} & 50.02(3.38) \\
    \bottomrule
    \end{tabular}
    }
    \caption{Experimental results of robustness against label corruption with different ratios.}
    \label{tab:roubst}
\end{table*}

\vspace{-0.1cm}
\paragraph{Analysis on Mixup Efficiency}
We run \mname\xspace and \mname$_*$ individually with identical experimental settings (including stopping criteria and hyper-parameters such as mixup ratio, etc.) on the same computing device and investigate the run-time computational efficiencies. Table \ref{tab:speed} illustrates the average time spent on the mixup procedures of \mname\xspace and \mname$_*$ per fold. We can observe that \mname$_*$ decreases the mixup time cost by a distinct margin, providing at least 2.03$\times$ and up to 3.46$\times$ of efficiency promotion. The results are consistent with our theoretical analysis of the convergence rate improvements, as shown in Proposition \ref{theo:converge}. More detailed efficiency statistics and discussions are introduced in Appendix \ref{sec:alg_eff}.
% This In Proposition \ref{theo:converge}, we introduce that the convergence rate of \mname$_*$ can be improved from $\mathcal{O}(t^{-1})$ to $\mathcal{O}(t^{-1})$. 

\begin{table*}[]
    \centering
    \resizebox{0.8\linewidth}{!}{
    \begin{tabular}{c|ccccc}
    \toprule
        \multicolumn{6}{c}{Avg. Mixup Time (s) / Fold }  \\
        Datasets & PROTEINS & NCI1 & NCI109 & IMDB-B & IMDB-M \\
    \midrule
        \mname & 802.24 & 1711.45 & 1747.24 & 296.62 & 212.53 \\
        \mname$_*$  & \textbf{394.57} & \textbf{637.41} & \textbf{608.61} & \textbf{85.69} & \textbf{74.53} \\
        Speedup  & 2.03$\times$ & 2.67$\times$ & 2.74$\times$ & 3.46$\times$ & 2.85$\times$ \\
    \bottomrule
    \end{tabular}
    }
    \caption{Comparisons of algorithm execution efficiency between \mname\xspace and \mname$_*$.}
    \label{tab:speed}
\end{table*}

\paragraph{Infeasibility Analysis on the Single-loop FGW Solver in \mname$_*$}
We conduct an experiment to analyze the infeasibility of our single-loop FGW solver compared with the strict CG solver. Practically, we randomly select 1,000 pairs of graphs from PROTEINS dataset and apply the two solvers to calculate the FGW distance between each pair of graphs. The distances of the $i$-th pair of graphs calculated by the strict solver and the relaxed solver are denoted as $d_i$ and $d^{*}_i$, respectively. We report the following metrics for comparison:
\begin{itemize}
\setlength{\itemsep}{0pt}
\setlength{\parsep}{0pt}
\setlength{\parskip}{0pt}
    \item \textbf{MAE}: Mean absolute error of FGW distance, i.e., $\frac{1}{N}\sum |d_i - d^{*}_i|$.
    \item \textbf{MAPE}: Mean absolute percentage error of FGW distance, i.e., $\frac{1}{N}\sum \frac{|d_i - d^{*}_i|}{d_i}$.
    \item \textbf{mean-FGW}: Mean FGW distance given by the strict CG solver, i.e., $\frac{1}{N}\sum d_i$.
    \item \textbf{mean-FGW*}: Mean FGW distance given by the single-loop solver, i.e., $\frac{1}{N}\sum d^{*}_i$.
    \item \textbf{T-diff}: L2-norm of the difference between two transportation plan matrices (divided by the size of the matrix for normalization). 
\end{itemize}

\begin{table*}[h]
    \centering
    \resizebox{0.7\linewidth}{!}{
    \begin{tabular}{ccccc}
    \toprule
        MAE & MAPE & mean-FGW & mean-FGW* & T-diff \\
    \midrule
        0.0126(0.0170) & 0.0748(0.1022) & 0.2198 & 0.2143 & 0.0006(0.0010) \\
    \bottomrule
    \end{tabular}
    }
    \caption{Infeasibility analysis on the single-loop FGW solver in \mname$_*$.}
    \label{tab:infeas}
\end{table*}

The results are shown in Table \ref{tab:infeas}. We can observe that the MAPE is only 0.0748, which means the FGW distance estimated by the single-loop relaxed solver is only 7.48\% different from the strict CG solver. Moreover, the absolute error is around 0.01, which is quite small compared with the absolute value of FGW distances (\textasciitilde 0.21). We can also find that the L2-norm of the difference between two transportation plan matrices is only 0.0006, which means two solvers give quite similar transportation plans. All the results imply that the single-loop solver will not produce huge infeasibility or make the estimation of FGW distance inaccurate.

% \begin{table*}[]
%     \centering
%     \resizebox{\linewidth}{!}{
%     \begin{tabular}{c|ccc|cc|ccc}
%     \toprule
%           \multirow{2}{*}{Dataset}& \multicolumn{3}{c|}{Total Mixup Time (s)} & \multicolumn{2}{c|}{Single FGW Iter. Time (s)} & \multicolumn{3}{c}{Avg. Outer Loop Converge Iter. (\#)} \\
%           &  \mname$_*$ & \mname & Speedup & \mname$_*$ & \mname & \mname$_*$ & \mname & Speedup\\
%     \midrule
%         PROTEINS & \textbf{394.57} & 802.24 & 2.03$\times$ & 0.0181 & \textbf{0.0225} & \textbf{80.64} & 153.69 & 6.68$\times$ \\
%         NCI1  & \textbf{637.41} & 1711.45 & 2.67$\times$ & \textbf{0.0140}& 0.0120 & \textbf{54.25}& 170.23 & 3.74$\times$ \\
%         NCI109  & \textbf{608.61} & 1747.24 & 2.74$\times$ & 0.0136 & \textbf{0.0123} & \textbf{53.12} & 169.98 & 3.74$\times$ \\
%         IMDB-B  & \textbf{85.69} & 296.62 & 3.46$\times$ & 0.0162 & \textbf{0.0119} & \textbf{20.03} & 95.48 & 4.66$\times$ \\
%         IMDB-M  & \textbf{74.53} & 212.53 & 2.85$\times$ & 0.0081 & \textbf{0.0060} & \textbf{14.26} & 60.91 & 5.48$\times$ \\
%     \bottomrule
%     \end{tabular}
%     }
%     \caption{Comparisons of algorithm execution efficiency between \mname\xspace and \mname$_*$.}
%     \label{tab:speed}
% \end{table*}

\subsection{Further Analyses}

\paragraph{Effects of the Trade-off Coefficient $\alpha$}
We provide sensitivity analysis w.r.t. the trade-off coefficient $\alpha$ of our proposed mixup methods valued from \{0.05, 0.5, 0.95, 1.0\} on two backbones and two datasets. Note that $\alpha$ controls the weights of the node feature alignment and graph structure alignment costs, and $\alpha =1.0$ falls back to the case of GW metric where node features are not incorporated. 
From the results shown in Table \ref{tab:vary_alpha}, we can observe that: 1) when FGW falls back to GW ($\alpha$ =1), where node features are no longer taken into account, the performance will significantly decay (generally the worst among all investigated $\alpha$ values). This demonstrates the importance of solving the joint modeling problem in graph mixup tasks. 2) $\alpha$=0.95 is the best setting in most cases. This empirically implies that it is better to conduct more structural alignment in graph mixup. In practice, we set $\alpha$ to 0.95 for all of our reported results.

\begin{table*}[]
    \centering
    \resizebox{\linewidth}{!}{
    \begin{tabular}{c|cccc|cccc}
    \toprule
          \multirow{2}{*}{Methods}& \multicolumn{4}{c|}{PROTEINS} & \multicolumn{4}{c}{NCI1} \\
          &  $\alpha$=0.95 & $\alpha$=0.5 & $\alpha$=0.05 & $\alpha$=1.0 & $\alpha$=0.95 & $\alpha$=0.5 & $\alpha$=0.05 & $\alpha$=1.0\\
    \midrule
        vGIN-\mname & 75.02(3.86) & \textbf{75.38(2.58)} & 74.86(2.40) & 74.57(2.62) & \textbf{78.32(2.65)} & 77.42(1.93) & 77.62(2.37) & 75.91(2.93)\\
        vGCN-\mname  & \textbf{76.01(3.19)} & 75.47(3.56) & 74.93(2.74) & 74.40(3.57) & \textbf{78.37(2.40)} & 77.93(1.68) & 78.00(1.00) & 77.27(0.92)\\
        vGIN-\mname$_*$  & \textbf{75.20(3.30)} & 74.57(3.30) & 74.39(3.01) & 73.94(3.86) & \textbf{77.27(2.71)} & 77.23(2.47) & 76.59(2.14) & 77.20(1.69) \\
        vGCN-\mname$_*$  & \textbf{75.20(3.03)} & 74.57(3.52) & 74.84(3.16) & 74.66(2.91) & 78.47(1.74) & 77.66(1.48) & 	\textbf{78.93(1.91)} & 77.71(1.97) \\
    \bottomrule
    \end{tabular}
    }
    \caption{Experimental results of different $\alpha$ on PROTEINS and NCI1 datasets.}
    \label{tab:vary_alpha}
\end{table*}

\paragraph{Effects of Mixup Graph Sizes}
We investigate the performance of \mname\xspace and \mname$_*$ with various mixup graph sizes (node numbers), including adaptive graph sizes (i.e., weighted average size of the mixup source graphs, $\Tilde{n} = \lambda n_1 + (1-\lambda) n_2$, which is selected as our method) and fixed graph sizes (i.e., the median size of all training graphs, 0.5 $\times$ and 2 $\times$ the median).
The ablation studies are composed on NCI1 and PROTEINS datasets using vGCN backbone as motivating examples, and the results are illustrated in Fig.\ref{fig:size}.
We can observe that the best performance is steadily obtained when selecting graph sizes with an adaptive strategy, whereas the fixed graph sizes lead to unstable results.
Specifically, selecting the median or larger size may occasionally yield comparable performances, yet selecting the smaller size can result in an overall performance decay of over 1\%.
This phenomenon is associated with the graph size generalization problem \cite{buffelli2022sizeshiftreg, bevilacqua2021size} that describes the performance degradation of GNNs caused by the graph size distributional shift between training and testing data.
The fixed strategy may aggravate this problem, particularly for small graphs that struggle to generalize to larger ones. In contrast, the adaptive strategy can potentially combat this distributional shift by increasing the data diversity and reach better test time performance.

\begin{figure}
    \centering
    \caption{Test performance of our methods with different graph size settings on NCI1 and PROTEINS datasets using vGCNs as the backbone. }
    \includegraphics[scale=0.4]{./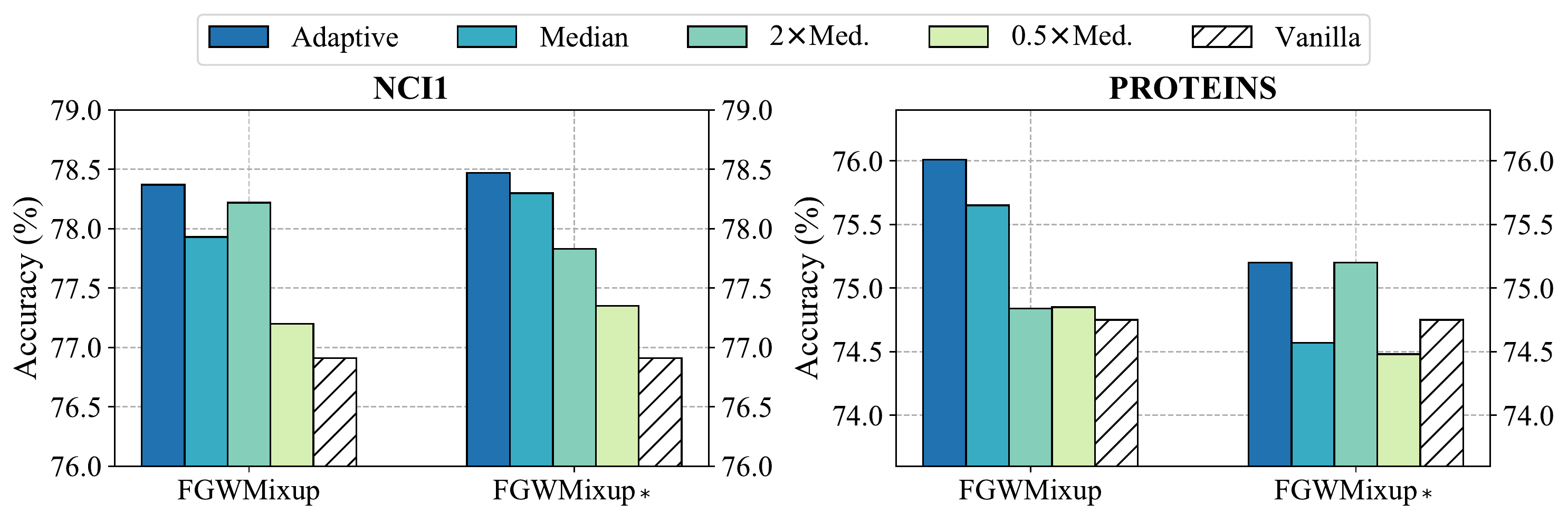}
    \label{fig:size}
\end{figure}

\paragraph{Effects of GNN Depths} 
To validate the improvement of our methods across various model depths, we evaluate the performance of \mname\xspace and \mname$_*$ using vGCNs equipped with different numbers (3-8) of layers.
We experiment on NCI1 and PROTEINS datasets, and the results are illustrated in Fig.\ref{fig:depth}.
We can observe that our methods consistently improve the performance on NCI1 under all GCN depth settings by a significant margin.
The same conclusion is also true for most cases on PROTEINS except for the 7-layer vGCN.
These results indicate a universal improvement of our methods on GNNs with various depths.

\begin{figure}
    \centering
    \caption{Test performance of our methods using vGCNs with varying numbers of layers on NCI1 and PROTEINS datasets. }
    \includegraphics[scale=0.36]{./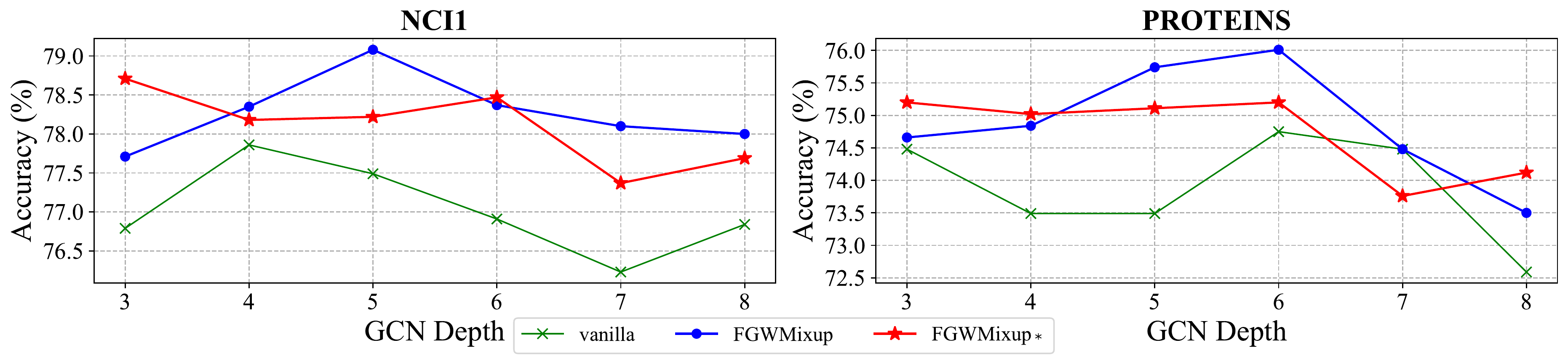}
    \label{fig:depth}
    \vspace{-0.3cm}
\end{figure}

\paragraph{Other Discussions} 
More further analyses are introduced in the Appendix, including qualitative analyses of our mixup results (see Appendix \ref{sec:qual_analysis}), further discussions on $\mathcal{G}$-Mixup (see Appendix \ref{sec:discussion_gmixup}), and sensitivity analysis of the hyperparameter $k$ in Beta distribution where mixup weights are sampled from (see Appendix \ref{sec:vary_k}).

\section{Related Works}
\paragraph{Graph Data Augmentation} There are currently two mainstream perspectives of graph data augmentation for graph-level classifications. A line of research concentrates on graph signal augmentation \cite{you2020graph, wang2020nodeaug, guo2021ifmixup}. Node Atrribute Masking \cite{you2020graph} assigns random node features for a certain ratio of graph nodes. IfMixup \cite{guo2021ifmixup} conducts Euclidean mixup on node features from different graphs with arbitrary node alignment strategy, whereas it damages the critical topologies (e.g., rings, bi-partibility) of the source graphs. 
Another line of work focuses on graph structure augmentation \cite{wang2020graphcrop, han2022g, park2022graph, yoo2022model, zhao2021data, you2020graph, park2021metropolis}. Approaches like Subgraph\cite{wang2020graphcrop, you2020graph}, DropEdge\cite{rong2019dropedge} and GAug\cite{zhao2021data} conduct removals or additions of graph nodes or edges to generate new graph structures. Graph Transplant\cite{park2022graph} and Submix\cite{yoo2022model} realize CutMix-like augmentations on graphs, which cut off a subgraph from the original graph and replace it with another. $\mathcal{G}$-Mixup \cite{han2022g} estimates a graph structure generator (i.e., graphon) for each class of graphs and conducts Euclidean addition on graphons to realize structural mixup. 
However, as GNNs can capture the complex interaction between the two entangled yet complementary input spaces, it is essential to model this interaction for a comprehensive graph data augmentation.
Regrettably, current works have devoted little attention to this interaction.
% However, the joint interactions of graph structures and signals are crucial for graph-level classification, and requires to be considered in graph data augmentation.

\paragraph{Optimal Transport on Graphs} Building upon traditional Optimal Transport (OT) \cite{peyre2019computational} methods (e.g., Wasserstein distance\cite{villani2003topics}), graph OT allows defining a very general distance metric between the structured/relational data points embedded in different metric spaces, where the data points are modeled as probability distributions. It proceeds by solving a coupling between the distributions that minimizes a specific cost. The solution of graph OT hugely relies on the (Fused) Gromov-Wasserstein (GW) \cite{memoli2011gromov, titouan2019optimal} distance metric. Further works employ the GW couplings for solving tasks such as graph node matching and partitioning \cite{xu2019scalable, xu2019gromov, chowdhury2021generalized}, and utilize GW distance as a common metric in graph metric learning frameworks \cite{vincent2021online, vincent2022template, chen2020optimal}.
Due to the high complexity of the GW-series algorithm, another line of research \cite{peyre2016gromov, vincent2021semi, scetbon2022linear, li2023convergent} concentrates on boosting the computational efficiency.
In our work, we formulate graph mixup as an FGW-based OT problem that solves the optimal node matching strategy minimizing the alignment costs of graph structures and signals. Meanwhile, we attempt to accelerate the mixup procedure with a faster convergence rate.

\vspace{-0.1cm}
\section{Conclusion and Limitation}
\vspace{-0.1cm}
In this work, we introduce a novel graph data augmentation method for graph-level classifications dubbed \mname. 
\mname\xspace formulates the mixup of two graphs as an optimal transport problem aiming to seek a "midpoint" of two graphs embedded in the graph signal-structure fused metric space.
We employ the FGW distance metric to solve the problem and further propose an accelerated algorithm that improves the convergence rate from $\mathcal{O}(t^{-1})$ to $\mathcal{O}(t^{-2})$ for better scalability of our method.
Comprehensive experiments demonstrate the effectiveness of \mname\xspace in terms of enhancing the performance, generalizability, and robustness of GNNs, and also validate the efficiency and correctness of our acceleration strategy.

Despite the promising results obtained in our work, it is important to acknowledge its limitations. Our focus has primarily been on graph data augmentation for graph-level classifications. However, it still remains a challenging question to better exploit the interactions between the graph signal and structure spaces for data augmentation in other graph prediction tasks, such as link prediction and node classification. 
% We believe that our idea and its extensions can potentially benefit future works in other graph data augmentation scenarios. 
Moreover, we experiment with four classic (MPNNs) and advanced (Graphormers) GNNs, while there remain other frameworks that could be taken into account. We expect to carry out a graph classification benchmark with more comprehensive GNN frameworks in our future work.
% We believe an open graph classification benchmark with more comprehensive GNN frameworks can benefit massive future works.

\section*{Acknowledgements}
This work was supported by the National Natural Science Foundation of China (No.82241052).

% \section*{References}

% \bibliographystyle{unsrt}
% \bibliography{ref}

% \begin{table*}[]
%     \centering
%     \resizebox{\linewidth}{!}{
%     \begin{tabular}{c|ccc|cc|ccc}
%     \toprule
%           \multirow{2}{*}{Dataset}& \multicolumn{3}{c|}{Total Mixup Time (s)} & \multicolumn{2}{c|}{Single FGW Iter. Time (s)} & \multicolumn{3}{c}{Avg. Outer Loop Converge Iter. (\#)} \\
%           &  \mname$_*$ & \mname & Speedup & \mname$_*$ & \mname & \mname$_*$ & \mname & Speedup\\
%     \midrule
%         PROTEINS & \textbf{394.57} & 802.24 & 2.16$\times$ & 0.0181 & \textbf{0.0225} & \textbf{80.64} & 153.69 & 6.68$\times$ \\
%         NCI1  & \textbf{637.41} & 1711.45 & 3.55$\times$ & \textbf{0.0140}& 0.0120 & \textbf{54.25}& 170.23 & 3.74$\times$ \\
%         NCI109  & \textbf{608.61} & 1747.24 & 3.53$\times$ & 0.0136 & \textbf{0.0123} & \textbf{53.12} & 169.98 & 3.74$\times$ \\
%         IMDB-B  & \textbf{85.69} & 296.62 & 2.20$\times$ & 0.0162 & \textbf{0.0119} & \textbf{20.03} & 95.48 & 4.66$\times$ \\
%         IMDB-M  & \textbf{74.53} & 212.53 & 3.15$\times$ & 0.0081 & \textbf{0.0060} & \textbf{14.26} & 60.91 & 5.48$\times$ \\
%     \bottomrule
%     \end{tabular}
%     }
%     \caption{Comparisons of algorithm execution efficiency between \mname\xspace and \mname$_*$.}
%     \label{tab:speed}
% \end{table*}

%%%%%%%%%%%%%%%%%%%%%%%%%%%%%%%%%%%%%%%%%%%%%%%%%%%%%%%%%%%%

\newpage
\appendix

\section{Problem Properties}
\subsection{FGW Distance Metric} 
\label{sec:fgw-metric}
Fused Gromov-Wasserstein Distance \cite{titouan2019optimal} are defined in Eq.\ref{eq:def}. It can also be rewritten as follows:
\begin{equation}
    \operatorname{FGW} (G_1, G_2) = \min_{\bm{\pi} \in \Pi(\bm{\mu}_1, \bm{\mu}_2)} \langle (1-\alpha) \bm{D} + \alpha L(\bm{A}_1, \bm{A}_2) \otimes \bm{\pi}, \bm{\pi} \rangle,
    \label{eq:fgw2}
\end{equation}
where $\bm{D} = \{ d(\bm{x}_1^{(i)}, \bm{x}_2^{(j)})\}_{ij}$ is the distance matrix of node features, and $L(\bm{A}_1, \bm{A}_2) = \{ \left|\bm{A_1}(i, k)-\bm{A_2}(j, \ell)\right| \}_{i,j,k,\ell}$ is a 4-dimensional tensor that depicts the structural distances.
We denote $\langle \bm{U}, \bm{V} \rangle = \operatorname{tr}(\bm{U}^\top \bm{V})$ as the matrix scalar product. The $\otimes$ operator conducts tensor-matrix multiplication as: $\mathcal{L} \otimes T \stackrel{\text { def. }}{=}\left(\sum_{k, \ell} \mathcal{L}_{i, j, k, \ell} T_{k, \ell}\right)_{i, j}$. Then according to Proposition 1 in \cite{peyre2016gromov}, when we take the $\ell_2$-norm to calculate the structural distance, we have:
\begin{equation}
    L(\bm{A}_1, \bm{A}_2) \otimes \bm{\pi} = c_{\bm{A}_1, \bm{A}_2} - 2\bm{A}_1 \bm{\pi} \bm{A}_2,
    \label{eq:kronecker}
\end{equation}
where $c_{\bm{A}_1, \bm{A}_2} = \bm{A}_1^{\odot 2} \bm{\mu}_1 \bm{1}_{n_2}^\top + \bm{1}_{n_1} \bm{\mu}_2^\top \bm{A}_2^{\odot 2}$, and 
$^{\odot2}$ denotes the Hadamard square (i.e., $\bm{U}^{\odot 2} = \bm{U} \odot \bm{U}$). Taking Eq.\ref{eq:kronecker} into Eq.\ref{eq:fgw2}, we can rewrite the minimizer as:
\begin{equation}
    \begin{aligned}
        \mathrm{RHS} &= (1-\alpha) \langle \bm{D}, \bm{\pi} \rangle + \overbrace{\alpha \langle \bm{A}_1^{\odot 2} \bm{\mu}_1 \bm{1}_{n_2}^\top, \bm{\pi} \rangle + \alpha \langle \bm{1}_{n_1} \bm{\mu}_2^\top \bm{A}_2^{\odot 2}, \bm{\pi} \rangle} - 2\alpha \langle \bm{A}_1 \bm{\pi} \bm{A}_2, \bm{\pi}\rangle \\
        &= \underbrace{\alpha (\bm{A}_1^{\odot 2} \bm{\mu}_1^\top \bm{\mu}_1 + \bm{A}_2^{\odot 2} \bm{\mu}_2^\top \bm{\mu}_2 )}_{\mathrm{constant}} + \langle (1-\alpha)\bm{D} - 2\alpha \bm{A}_1 \bm{\pi} \bm{A}_2, \bm{\pi}\rangle
    \end{aligned}
\end{equation}
The items in the overbrace become a constant because the marginals of $\bm{\pi}$ are fixed as $\bm{\mu}_1$ and $\bm{\mu}_2$, respectively. Therefore, we can remove the constant term and formulate an equivalent optimization objective denoted as $\overline{\operatorname{FGW}}(G_1, G_2)$, which can be rewritten as:
\begin{equation}
\begin{aligned}
    \overline{\operatorname{FGW}}(G_1, G_2) &= \min_{\bm{\pi} \in \Pi(\bm{\mu}_1, \bm{\mu}_2)} \langle (1-\alpha)\bm{D} - 2\alpha \bm{A}_1 \bm{\pi} \bm{A}_2, \bm{\pi}\rangle \\
    &= \min_{\bm {\pi} \in \Pi(\bm{\mu}_1, \bm{\mu}_2)} (1-\alpha)\operatorname{tr}(\bm{\pi}^\top \bm{D}) - 2\alpha \operatorname{tr}(\bm{\pi}^\top \bm{A}_1 \bm{\pi} \bm{A}_2).
\end{aligned}
\end{equation}
We denote the above optimization objective as a function $f(\bm{\pi})$ w.r.t. $\bm{\pi}$, and we call it the FGW function in the main text.

\subsection{Mirror Descent and the Relaxed FGW Solver in \mname$_*$}
\label{sec:md}
\begin{definition}[Normal Cone]
    Given any set $\mathcal{C}$ and point $x \in \mathcal{C}$, we can define \textbf{normal cone} as:
    \begin{equation*}
        \mathcal{N}_{\mathcal{C}}(x) = \{ g : g^\top x \geq g^\top y, \forall y \in \mathcal{C}\}
    \end{equation*}
\end{definition}
There are some properties of normal cone. The normal cone is always convex, and $\operatorname{Proj}_{\mathcal{C}}(x+z) = x, \forall x \in \mathcal{C}, z\in \mathcal{N}_{\mathcal{C}}(x)$ always holds.

\begin{proposition}[The Update of Mirror Descent]
     The update of Mirror Descent takes the form of $x^{(k+1)} = \arg \min_{x\in \mathcal{X}} D_\phi(x, y^{(k+1)}))$, where $\phi(\cdot)$ is a Bregman Projection, and $ \nabla \phi(y^{(k+1)}) = \nabla \phi(\omega^{(k)}) - \nabla f(\pi) / \rho$.
     \label{prop:MD}
\end{proposition}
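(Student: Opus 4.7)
The plan is to derive the stated projection form of the mirror descent update from the classical linearized-plus-Bregman formulation, by algebraically absorbing the linear term $\langle \nabla f(\omega^{(k)}), x\rangle$ into the Bregman divergence through a change of anchor point from $\omega^{(k)}$ to the dual-space point $y^{(k+1)}$. Throughout I will use only the definition of the Bregman divergence induced by $\phi$, namely $D_\phi(x,y) = \phi(x) - \phi(y) - \langle \nabla\phi(y), x-y\rangle$, and the assumption that $\phi$ is a Legendre function so that $\nabla\phi$ is invertible on its domain (this holds for the negative entropy used later in the FGW solver).

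First I would write the standard mirror descent subproblem at iterate $\omega^{(k)}$ with step size $1/\rho$,
\begin{equation*}
x^{(k+1)} \;=\; \arg\min_{x\in\mathcal{X}} \Big\{\, \langle \nabla f(\omega^{(k)}), x\rangle \;+\; \rho\, D_\phi\!\left(x, \omega^{(k)}\right) \,\Big\},
\end{equation*}
and expand the Bregman term to obtain, up to constants independent of $x$,
\begin{equation*}
\rho\,\phi(x) \;-\; \big\langle\, \rho\,\nabla\phi(\omega^{(k)}) - \nabla f(\omega^{(k)}),\; x\,\big\rangle \;+\; \mathrm{const}(x).
\end{equation*}
The coefficient in the linear term is, by the proposition's hypothesis, exactly $\rho\,\nabla\phi(y^{(k+1)})$. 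Thus the objective equals $\rho\bigl[\phi(x) - \langle \nabla\phi(y^{(k+1)}), x\rangle\bigr]$ plus an $x$-independent constant.

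Next I would recognize that $\phi(x) - \langle\nabla\phi(y^{(k+1)}), x\rangle$ differs from $D_\phi(x, y^{(k+1)})$ only by the terms $-\phi(y^{(k+1)}) + \langle\nabla\phi(y^{(k+1)}), y^{(k+1)}\rangle$, which do not depend on $x$. Consequently the subproblem reduces to
\begin{equation*}
x^{(k+1)} \;=\; \arg\min_{x\in\mathcal{X}} \; D_\phi\!\left(x, y^{(k+1)}\right),
\end{equation*}
which is the claimed Bregman projection form. The identification $y^{(k+1)} = (\nabla\phi)^{-1}\!\bigl(\nabla\phi(\omega^{(k)}) - \nabla f(\pi)/\rho\bigr)$ is well-defined precisely because $\nabla\phi$ is a bijection between the primal and dual domains under the Legendre assumption on $\phi$.

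The main obstacle, though modest for this statement, is bookkeeping: carefully isolating which terms genuinely depend on $x$ so that the identity between the two arg\,min problems is exact and not merely formal. A secondary subtlety is ensuring that $y^{(k+1)}$ is indeed in the domain where $\nabla\phi$ is invertible, which in our setting (negative entropy on the strictly positive orthant, giving the multiplicative update $\pi \odot \exp(-\gamma\nabla f(\pi))$ prior to the simplex projection) is immediate since $\nabla\phi(\pi)_i = 1 + \log \pi_i$ and the exponential inverse is defined for all real inputs. With these two points handled, the rest is a one-line rearrangement.
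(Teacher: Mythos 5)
Your proof is correct, and it reaches the same conclusion from the same starting point (the linearized-plus-Bregman-proximal subproblem with step size $1/\rho$), but the route differs from the paper's in one substantive way. The paper passes through the first-order optimality condition of the subproblem, $0 \in \nabla f(x^{(k)})/\rho + \nabla\phi(x^{(k+1)}) - \nabla\phi(x^{(k)}) + \mathcal{N}_{\mathcal{X}}(x^{(k+1)})$, and then recognizes this inclusion as the optimality condition of $\arg\min_{x\in\mathcal{X}} D_\phi(x, y^{(k+1)})$; this matches stationarity conditions rather than objectives, and implicitly leans on convexity of $\phi$ and $\mathcal{X}$ to upgrade coinciding stationary points to coinciding minimizers. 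You instead show directly that the two objectives are equal up to a positive scaling by $\rho$ and an $x$-independent additive constant, which gives identity of the $\arg\min$ sets unconditionally and sidesteps the normal-cone bookkeeping entirely. Your version is the more elementary and slightly more robust argument; the paper's version has the advantage of setting up the normal-cone formalism that it reuses immediately afterward in defining the fixed-point set $\mathcal{X}_{rel}$ for Proposition 2. Your explicit remark on invertibility of $\nabla\phi$ (the Legendre condition, trivially satisfied by negative entropy on the positive orthant) addresses a point the paper leaves implicit; it is needed for $y^{(k+1)}$ to be well defined, so including it is a genuine, if small, improvement in rigor.
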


\begin{proof}
    For a constrained optimization problem $\min_{x \in \mathcal{X}} f(x), \mathcal{X} \subset \mathcal{D}$, Mirror Descent iteratively updates $x$ with $x^{(k+1)} = \arg \min_{x\in \mathcal{X}} \nabla f(x^{(k)})^\top x + \rho D_\phi(x, x^{(k)})$ with Bregman divergence $D_\phi(x, y) := \phi(x) - \phi(y) - \langle \nabla\phi(y), x-y \rangle$. We further have:
    \begin{equation}
        \begin{aligned}
             & x^{(k+1)} = \arg \min_{x \in \mathcal{X}} \nabla f(x^{(k)})^\top x + \rho D_\phi(x, x^{(k)}) \\
            \Rightarrow  & 0 \in \nabla f(x^{(k)})^\top / \rho + \nabla \phi(x^{(k+1)}) - \nabla \phi(x^{(k)}) + \mathcal{N}_{\mathcal{X}}(x^{(k+1)}) \\
            \Rightarrow & x^{(k+1)} = \arg \min_{x\in \mathcal{X}} D_\phi(x, y^{(k+1)})), \ \mathrm{where} \ \nabla \phi(y^{(k+1)}) = \nabla \phi(x^{(k)}) - \nabla f(x^{(k)}) / \rho
        \end{aligned}
        \label{eq:MD}
    \end{equation}
\end{proof}

\begin{figure}
    \centering
    \caption{The illustration of the MD procedure.}
    \includegraphics[scale=0.32]{./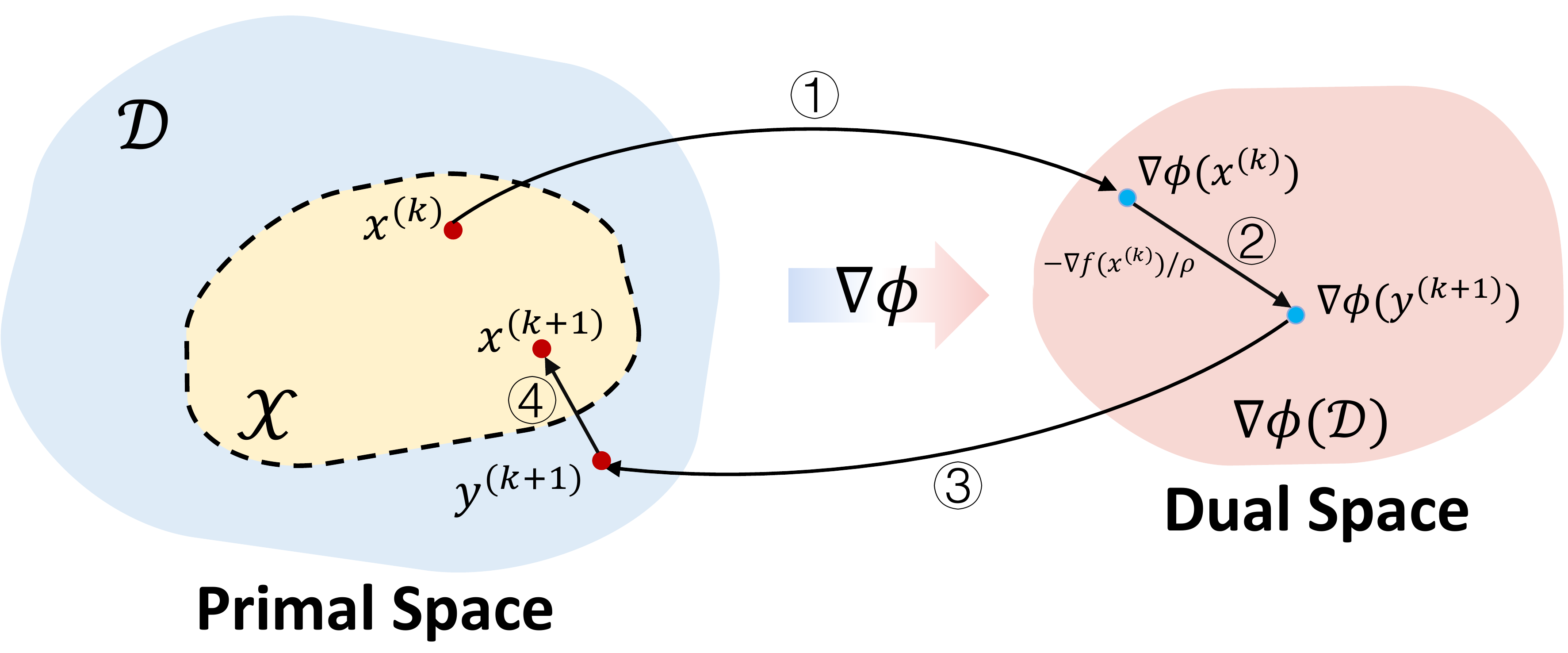}
    \label{fig:MD}
\end{figure}

This updating procedure is vividly depicted by Fig.\ref{fig:MD}. For each update of $x^{(k)}$, MD conducts the following steps: 
\begin{enumerate}[label=\protect\circled{\arabic*}]
\item Apply mirror mapping ($\nabla \phi(\cdot)$) on $x^{(k)}$ from the primal space $\mathcal{D}$ to the dual space $\nabla \phi(\mathcal{D})$. 
\item Take a gradient step in the dual space, which is $ \delta^{(k+1)} = \nabla \phi(x^{(k)}) - \nabla f(x^{(k)}) / \rho $. 
\item Inversely mapping ($\nabla \phi^{-1}(\cdot)$) the dual space back to the primal space, which is $y^{(k+1)} = \nabla \phi^{-1}(\delta^{(k+1)})$, also written as $\nabla \phi(y^{(k+1)}) = \nabla \phi(x^{(k)}) - \nabla f(x^{(k)}) / \rho$.
\item The updated point may fall out of the constraints of $\mathcal{X}$. Thus, Bregman projection is conducted to project $y^{(k+1)}$ to $x^{(k+1)} \in \mathcal{X}$, where $x^{(k+1)} =\arg \min_{x\in \mathcal{X}} D_\phi(x, y^{(k+1)}))$.
\end{enumerate}

The relaxed FGW conducts projections to constraints of rows and columns alternately, which is written as $\min_{\pi} f(\pi) + \mathbb{I}_{\{ \pi \in \Pi_1\}} + \mathbb{I}_{\{ \pi \in \Pi_2\}}$, where $\mathbb{I}$ is the indicator function. We adopt the operator splitting strategy, and the optimization can be formulated as:
\begin{equation}
    F_\rho(\pi, \omega) = f(\pi) + f(\omega) + \mathbb{I}_{\{ \pi \in \Pi_1\}} + \mathbb{I}_{\{ \omega \in \Pi_2\}}
\end{equation}
We apply Mirror Descent algorithm with Bregman divergence $D_\phi(x, y)$ to solve this problem, whose update takes the form of:
\begin{equation}
    \begin{aligned}
& \pi^{(k+1)}=\arg \min _{\pi \in \Pi_1} \nabla f(\pi)^{\top} \pi+\rho D_\phi\left(\pi, \omega^{(k)}\right) \\
& \omega^{(k+1)}=\arg \min _{\omega \in \Pi_2} \nabla f(\omega)^{\top} \omega+\rho D_\phi\left(\omega, \pi^{(k+1)}\right)
\end{aligned}
\end{equation}
Let's take the MD update of $\pi$ as an example. Taking relative entropy as $\phi(x) = \sum_i x_i \log x_i$, we can solve $y^{(k+1)}$ (conducting steps \circled{1} to \circled{3}) with exact analytical solutions. Specifically, $\nabla \phi(x) = \sum_i (1+\log x_i)$ , and the update w.r.t. $\pi$ writes:
\begin{equation}
\begin{aligned}
    & \pi^{(k+1)} = \arg \min_{\pi \in \Pi_1} D_\phi(\pi, y^{(k+1)}), \ \mathrm{where} \ \log(y^{(k+1)}) = \log(\omega^{(k)}) - \nabla f(\omega^{(k)})/\rho \\
    \Rightarrow & \pi^{(k+1)} = \arg \min_{\pi \in \Pi_1} D_\phi(\pi, y^{(k+1)}),  \ \mathrm{where} \ y^{(k+1)} = \omega^{(k)} \odot \exp(-\nabla f(\omega^{(k)})/\rho).
\end{aligned}
\end{equation}
Furthermore, under this setting, we also have an exact solution of the projection step \circled{4} $\arg \min_{\pi \in \Pi_1} D_\phi(\pi, y^{(k+1)})) = \operatorname{diag}(\mu_1 ./ y^{(k+1)}1_{n_2}) y^{(k+1)}$. Then the MD update procedure can be formulated according to Prop. \ref{prop:MD} as follows:
% \begin{equation}
%     \begin{aligned}
%         & \pi^{(k+1)} = \arg \min_{\pi \in \Pi_1} D_\phi(\pi, y^{(k+1)}) = \operatorname{diag}(\mu_1 ./ y^{(k+1)}1_{n_2}) y^{(k+1)} , \\
%         & \mathrm{where} \  \nabla \phi(y^{(k+1)}) = \nabla \phi(\omega^{(k)}) - \nabla f(\pi) / \rho\\
%     \end{aligned}
% \end{equation}
% Taking relative entropy as $\phi(x) = \sum_i x_i \log x_i$, we have $\log(y^{(k+1)}) = \log(\omega^{(k)}) - \nabla f(\pi)/\rho$, which is $y^{(k+1)} = \omega^{(k)} \odot \exp(-\nabla f(\pi)/\rho)$. Then the update can be rewritten as:
\begin{equation}
    \pi^{(k+1)} \leftarrow \omega^{(k)} \odot \exp(-\nabla f(\omega^{(k)})/\rho), \pi^{(k+1)} \leftarrow \operatorname{diag}(\mu_1 ./ \pi^{(k+1)}1_{n_2}) \pi^{(k+1)}
    \label{eq:upd}
\end{equation}
Furthermore, the sub-gradient of our FGW optimizing objective \textit{w.r.t.} $\bm{\pi}$ is calculated as follows:
\begin{equation}
    \nabla f(\pi) = \nabla_{\pi} \mathrm{FGW}(G_1, G_2) = (1-\alpha)\bm{D} - 4\alpha \bm{A}_1 \pi \bm{A}_2.
    \label{eq:fgw_grad}
\end{equation}
Taking Eq.\ref{eq:fgw_grad} into Eq.\ref{eq:upd}, we have the update of $\pi$, and a similar procedure is conducted for $\omega$. Setting $\rho$ to $1/ \gamma$, the final update procedure is shown in Line 8-11 in Alg.\ref{alg:accBCD}.

\section{Proofs of Theoretical Results}
\label{sec:proof}
\subsection{Proof of Proposition \ref{theo:converge}}
\begin{proof}
From Corollary 4.6 of \cite{ghosal2022convergence} we have: for any Sinkhorn iteration optimizing $\pi \in \Pi(\mu, \nu)$ on two different marginals alternately $\mu, \nu$, let $t_1 = \inf \{t \geq 0: H(\mu_{2t} | \mu) \leq 1\} - 1$, when iterations $t \geq 2t_1$:
\begin{equation}
\begin{aligned}
H\left(\mu_{2 t} \mid \mu\right)+H\left(\mu \mid \mu_{2 t}\right) &\leq 10 \frac{C_1^2 \vee H\left(\pi_* \mid R\right)}{\left(\lfloor t / 2\rfloor-t_1\right) t}=\mathcal{O}\left(t^{-2}\right), \\
H\left(\pi_{2 t} \mid \pi_*\right)+H\left(\pi_* \mid \pi_{2 t}\right) &\leq 5 \frac{C_1^2 \vee\left(H\left(\pi_* \mid R\right)^{1 / 2} C_1\right)}{\sqrt{\left(\lfloor t / 2\rfloor-t_1\right) t}}=\mathcal{O}\left(t^{-1}\right),
\end{aligned}
\end{equation}
where $C_1$ is a constant, and $R$ is a fixed reference measure.
\end{proof}

\subsection{Proof of Proposition \ref{theo:bound}}

\begin{definition}[Bounded Linear Regularity, Definition 5.6 in \cite{bauschke1996projection}]
Let $C_1, C_2, \cdots, C_N$ be closed convex subsets of $\mathbb{R}^{n}$ with a non-empty intersection $C$. We call the set $\{C_1, C_2, \cdots, C_N\}$ is \textbf{bounded linearly regular} if for every bounded subset $\mathbb{B}$ of $\mathbb{R}^{n}$, there exists a constant $\kappa > 0$ such that
\begin{equation*}
    d(x, C) \leq \kappa \max_{i \in \{1,...,N\}} d(x, C_i), \forall x \in \mathbb{B}, \  \mathrm{where} \ d(x, \mathcal{C}) := \min_{x' \in \mathcal{C} } \lVert x - x'  \rVert.
\end{equation*}
\label{def:blr}
\end{definition}
The BLR condition naturally holds if all the $C_i$ are polyhedral sets.

\begin{definition}[Strong Convexity]
A differentiable function $f(\cdot)$ is \textbf{strongly convex} with the constant $m$, if the following inequality holds for all points $x,y$ in its domain:
\begin{equation*}
    (\nabla f(x) - \nabla f(y))^\top (x - y) \geq m \lVert x- y \rVert^2,
\end{equation*}
where $\lVert \cdot \rVert$ is the corresponding norm.
\label{def:strongconvex}
\end{definition}
In Mirror Descent algorithm, the Bregman divergence $D_h(x, y)$ requires $h$ to be strongly convex.

\begin{proof}
Recall that the fixed point set of relaxed FGW is $\mathcal{X}_{rel}=\{\pi^* \in C_1 , \omega^* \in C_2:  0 \in \nabla f(\pi^*)+\rho(\nabla h(\omega^*) - \nabla h(\pi^*)) + \mathcal{N}_{C_1}(\pi^*), \mathrm{and} \ 0 \in \nabla f(\omega^*)+\rho(\nabla h(\pi^*) - \nabla h(\omega^*)) + \mathcal{N}_{C_2}(\omega^*) \}$. Let $p \in \mathcal{N}_{C_1}(\pi^*)$ and $q \in \mathcal{N}_{C_2}(\omega^*)$, $\mathcal{X}_{rel}$ is denoted as:
\begin{equation}
\begin{aligned}
    \mathcal{X}_{rel}=\{\pi^* \in C_1 , \omega^* \in C_2: &  \nabla f(\pi^*)+\rho(\nabla h(\omega^*) - \nabla h(\pi^*)) + p = 0, p \in \mathcal{N}_{C_1}(\pi^*) \\
    &  \nabla f(\omega^*)+\rho(\nabla h(\pi^*) - \nabla h(\omega^*)) + q = 0, q \in \mathcal{N}_{C_2}(\omega^*) \}
\end{aligned}
\label{eq:xrel}
\end{equation}

We denote $\hat{\pi} = \operatorname{Proj}_{C_1 \cap C_2}(\pi^*)$. According to Definition \ref{def:blr}, as $C_1$ and $C_2$ are both polyhedral constraints and satisfy the BLR condition, we have:
\begin{equation}
    \begin{aligned}
        \left\|\hat{\pi}-\pi^{*}\right\|+\left\|\hat{\pi}-\omega^{*}\right\| & \leq 2\left\|\hat{\pi}-\pi^{*}\right\|+\left\|\pi^{*}-\omega^{*}\right\| \\  
        & =2 d\left(\pi^{*}, C_1 \cap C_2\right)+\left\|\pi^{*}-\omega^{*}\right\| \\
        & \leq(2 \kappa+1)\left\|\pi^{*}-\omega^{*}\right\| .
    \end{aligned}   
    \label{eq:piomega1}
\end{equation}
The first inequality holds based on the triangle inequality of the norm distances, and the second inequality holds based on the definition of point-to-space distance in Def.\ref{def:blr}.
Moreover, according to the definition of $\mathcal{X}_{rel}$, for any $(\pi^*, \omega^*) \in \mathcal{X}_{rel}$, we have:
\begin{equation}
    \begin{aligned}
    \nabla f\left(\omega^{*}\right)^\top \left(\hat{\pi}-\pi^{*}\right)+\rho\left(\nabla h\left(\pi^{*}\right)-\nabla h\left(\omega^{*}\right)\right)^\top \left(\hat{\pi}-\pi^{*}\right)+p^\top\left(\hat{\pi}-\pi^{*}\right) & =0, p \in \mathcal{N}_{C_1}\left(\pi^{*}\right) \\
    \nabla f\left(\pi^{*}\right)^\top\left(\hat{\pi}-\omega^{*}\right)+\rho\left(\nabla h\left(\omega^{*}\right)-\nabla h\left(\pi^{*}\right)\right)^\top\left(\hat{\pi}-\omega^{*}\right)+q^\top\left(\hat{\pi}-\omega^{*}\right)& =0, q \in \mathcal{N}_{C_2}\left(\omega^{*}\right)
\end{aligned}
\end{equation}
Summing up the equations above, we have:

\begin{equation}
\resizebox{\linewidth}{!}{$
    \begin{aligned}
& \quad \nabla f\left(\omega^{*}\right)^\top\left(\hat{\pi}-\pi^{*}\right)+\nabla f\left(\pi^{*}\right)^\top\left(\hat{\pi}-\omega^{*}\right)+\rho\left(\nabla h\left(\pi^{*}\right)-\nabla h\left(\omega^{*}\right)\right)^\top\left(\omega^{*}-\pi^{*}\right)+p^\top\left(\hat{\pi}-\pi^{*}\right)+q^\top\left(\hat{\pi}-\omega^{*}\right)\\
& \stackrel{(a)}{=} \nabla f\left(\omega^{*}\right)^\top\left(\hat{\pi}-\pi^{*}\right)+\nabla f\left(\pi^{*}\right)^\top\left(\hat{\pi}-\omega^{*}\right)-\rho\left(D_h\left(\pi^{*}, \omega^{*}\right)+D_h\left(\omega^{*}, \pi^{*}\right)\right)+p^\top\left(\hat{\pi}-\pi^{*}\right)+q^\top\left(\hat{\pi}-\omega^{*}\right) = 0\\
\end{aligned}
$}
\label{eq:bregdivadd}
\end{equation}

The $(a)$ equation holds based on the definition of Bregman Divergence $D_h(x, y) := h(x) - h(y) - \langle \nabla h(y), x-y \rangle$.
Furthermore, due to the property of normal cones $\mathcal{N}_{C_1}(\pi^*), \mathcal{N}_{C_2}(\omega^*)$, we have $p^\top\left(\pi -\pi^{*}\right) \leq 0, \forall \pi \in C_1$ and $q^\top\left(\omega-\omega^{*}\right) \leq 0, \forall \omega \in C_2$. Taking $\pi$ and $\omega$ as $\hat{\pi} \in C_1 \cap C_2$, we have $p^\top\left(\hat{\pi}-\pi^{*}\right)+q^\top\left(\hat{\pi}-\omega^{*}\right) \leq 0$. Therefore, from \ref{eq:bregdivadd} we have the following inequality:
\begin{equation}
\begin{aligned}
    \nabla f\left(\omega^{*}\right)^\top\left(\hat{\pi}-\pi^{*}\right)+\nabla f\left(\pi^{*}\right)^\top & \left(\hat{\pi}-\omega^{*}\right)  -\rho\left(D_h\left(\pi^{*}, \omega^{*}\right)+D_h\left(\omega^{*}, \pi^{*}\right)\right) \geq 0 \\
    \Rightarrow \rho\left(D_h\left(\pi^{*}, \omega^{*}\right)+D_h\left(\omega^{*}, \pi^{*}\right)\right) & \leq \nabla
    f\left(\omega^{*}\right)^\top\left(\hat{\pi}-\pi^{*}\right)+\nabla f\left(\pi^{*}\right)^\top\left(\hat{\pi}-\omega^{*}\right) \\
    & \leq \lVert f\left(\omega^{*}\right) \rVert \lVert \hat{\pi}-\pi^{*} \rVert + \lVert \nabla f\left(\pi^{*}\right) \rVert \lVert \hat{\pi}-\omega^{*} \rVert \\
    & \stackrel{(b)}{\leq} C_f (\lVert \hat{\pi}-\pi^{*} \rVert + \lVert \hat{\pi}-\omega^{*} \rVert) \\
    & \stackrel{(\ref{eq:piomega1})}{\leq} C_f(2\kappa +1) \lVert \pi^{*} - \omega^{*} \rVert. 
\end{aligned}
\label{eq:upperbound}
\end{equation}
The $(b)$ inequality holds as the optimization objective is a quadratic function with bounded constraints, which means the gradient norm has a natural constant upper bound $C_f$. For the left hand side of the inequality above, the sum of Bregman divergences is also lower bounded by
\begin{equation}
    D_h\left(\pi^{*}, \omega^{*}\right)+D_h\left(\omega^{*}, \pi^{*}\right) = \left(\nabla h\left(\pi^{*}\right)-\nabla h\left(\omega^{*}\right)\right)^\top\left(\pi^{*} - \omega^{*}\right) \geq \sigma \lVert \pi^{*} - \omega^{*} \rVert^2, 
\label{eq:lowerbound}
\end{equation}
as $h(\cdot)$ is $\sigma$-strongly convex in the definition of Bregman Divergence.
Therefore, combining (\ref{eq:lowerbound}) and (\ref{eq:upperbound}), we have: 
\begin{equation}
    \lVert \pi^{*} - \omega^{*} \rVert \leq \frac{C_f(2\kappa +1)}{\sigma \rho}
    \label{eq:res1}
\end{equation}
Based on this important result, we analyze the distance between $\mathcal{X}$ and $\mathcal{X}_{rel}$, which is given by $\mathrm{dist}(\frac{\pi^* + \omega^*}{2}, \mathcal{X})  := \min_{x \in \mathcal{X}} \left\lVert\frac{\pi^* + \omega^*}{2} - x \right\rVert$, $\pi^* \in C_1, \omega^* \in C_2$. By adding up the two equations in \ref{eq:xrel} depicting the conditions on $\pi^*$ and $\omega^*$ of $\mathcal{X}_{rel}$, we have:
\begin{equation}
    \nabla f(\pi^*) + \nabla f(\omega^*) + p + q = 0 \stackrel{(c)}{\Rightarrow} \nabla f(\frac{\pi^* + \omega^*}{2}) + \frac{p + q}{2} = 0
    \label{eq:linearoperator}
\end{equation}
The $(c)$ derivation holds due to the linear property of the subgradient of FGW w.r.t. $\pi$ (see Eq.\ref{eq:fgw_grad}). Invoking this equality, we present the following uppper bound analysis on $\mathrm{dist}(\frac{\pi^* + \omega^*}{2}, \mathcal{X})$:
\begin{equation}
    \begin{aligned}
\operatorname{dist}\left(\frac{\pi^{*}+w^{*}}{2}, \mathcal{X}\right) & \stackrel{(d)}{\leq} \epsilon\left\|\frac{\pi^{*}+w^{*}}{2}-\operatorname{proj}_{C_1 \cap C_2}\left(\frac{\pi^{*}+w^{*}}{2}-\nabla f\left(\frac{\pi^{*}+w^{*}}{2}\right)\right)\right\| \\
& \stackrel{(e)}{=}\epsilon\left\|\frac{\pi^{*}+w^{*}}{2}-\operatorname{proj}_{C_1 \cap C_2}\left(\frac{\pi^{*}+p+w^{*}+q}{2}\right)\right\| \\
& \stackrel{(f)}{=} \frac{\epsilon}{2}\left\|\operatorname{proj}_{C_1}\left(\pi^{*}+p\right)+\operatorname{proj}_{C_1}\left(w^{*}+q\right)-2 \operatorname{proj}_{C_1 \cap C_2}\left(\frac{\pi^{*}+p+w^{*}+q}{2}\right)\right\| \\
& \stackrel{(g)}{\leq} \frac{M \epsilon}{2}\left\|\pi^{*}-w^{*}\right\|,
\end{aligned}
\label{eq:res2}
\end{equation}
The $(d)$ inequality comes from the Luo-Tseng error bound condition \cite{luo1992error} and Prop.3.1 in \cite{li2023convergent}. The $(e)$ holds based on Eq.\ref{eq:linearoperator}. The $(f)$ equality holds based on the property of the normal cone, which is $\operatorname{Proj}_C(x+z) = x, \forall x \in C, z\in \mathcal{N}_C(x)$. The $(g)$ inequality holds by invoking Lemma 3.2 in \cite{li2023convergent}, which tells $\left\|\operatorname{proj}_{C_1}(x)+\operatorname{proj}_{C_2}(y)-2 \operatorname{proj}_{C_1 \cap C_2}\left(\frac{x+y}{2}\right)\right\| \leqslant M\left\|\operatorname{proj}_{C_1}(x)-\operatorname{proj}_{C_2}(y)\right\|$.
Finally, combining (\ref{eq:res1}) and (\ref{eq:res2}), we can obtain the desired result, i.e.,
\begin{equation}
    \forall (\pi^*, \omega^*) \in \mathcal{X}_{rel}, \ \mathrm{dist}(\frac{\pi^* + \omega^*}{2}, \mathcal{X}) \leq \frac{(2\kappa+1)\epsilon C_f M}{2\sigma \rho} = \tau / \rho
\end{equation}
where $\tau := \frac{(2\kappa+1)\epsilon C_f M}{2\sigma}$ is a constant.
\end{proof}

% & \stackrel{(\bullet)}{\Rightarrow} \nabla f\left(\omega^{*}\right)^\top\left(\hat{\pi}-\pi^{*}\right)+\nabla f\left(\pi^{*}\right)^\top\left(\hat{\pi}-\omega^{*}\right)-\rho\left(D_h\left(\pi^{*}, \omega^{*}\right)+D_h\left(\omega^{*}, \pi^{*}\right)\right) \geq 0

\section{Complete Mixup Procedure}
\label{sec:mixup_proc}
We provide the complete mixup procedure of our method in Algorithm \ref{alg:whole}. We conduct augmentation only on the training set, and we mixup graphs from every pair of classes. For a specific mixup of two graphs, we apply our proposed algorithm \mname$_{(*)}$ (Alg.\ref{alg:BCD}, \ref{alg:accBCD}) to obtain the synthetic mixup graph $\Tilde{G} = (\Tilde{\bm{\mu}}, \Tilde{\bm{X}}, \Tilde{\bm{A}})$.
However, the numerical solution does not ensure a discrete adjacency matrix $\Tilde{\bm{A}}$. Thus, we adopt a thresholding strategy to discretize $\Tilde{\bm{A}}$, setting entries below and above the threshold to 0 and 1, respectively.
The threshold is linearly searched between the maximum and minimum entries of $\Tilde{\bm{A}}$, aiming to minimize the density difference between the mixup graph and the original graphs.
The mixup graphs are added to the training set for augmentation, and the training, validating and testing procedures follow the traditional paradigms.

\begin{algorithm}
\caption{The whole mixup procedure}
\label{alg:whole}
\begin{algorithmic}[1]
    \State \textbf{Input:} Training set $\mathcal{G} = \{(G_i, y_i)\}_i$, mixup ratio $\beta$
    \State \textbf{Output:} Augmented training set $\Tilde{\mathcal{G}}$
    \State $N = |\mathcal{G}|$, $N_y = |\{ y_i \}_i|$, $\Tilde{\mathcal{G}} = \mathcal{G}$
    \For{$i \neq j$ in range ($N_y$)}
        \For{$k$ in range ($2\beta N / N_y(N_y-1)$)} 
            \State Randomly sample $G_i = (\bm{\mu}_i, \bm{X}_i, \bm{A}_i)$ from class $y_i$ and $G_j= (\bm{\mu}_j, \bm{X}_j, \bm{A}_j)$ from $y_j$ 
            \State Generate new mixup graph and its label $\Tilde{G}, \Tilde{y} = $ \mname$_{(*)} (G_i, G_j) $, $\Tilde{G} = (\Tilde{\bm{\mu}}, \Tilde{\bm{X}}, \Tilde{\bm{A}})$ .
            \State Search the threshold $\theta$ to discretize $\Tilde{\bm{A}}$: $\Tilde{\bm{A}} \leftarrow \{ 1 \ \mathrm{if} \  a_{ij} \geq \theta \  \mathrm{else} \ 0 \}_{ij}$, which minimizes the density differences between $\Tilde{\bm{A}}$ and ${\bm{A}}_1$ , ${\bm{A}}_2$.
            \State $\Tilde{\mathcal{G}} \leftarrow \Tilde{\mathcal{G}} \cup \{ \Tilde{G} \}$
        \EndFor
    \EndFor
    \State \Return $\Tilde{\mathcal{G}}$
\end{algorithmic}
\end{algorithm}

\section{Experimental Settings}
\subsection{Dataset Information}
\label{sec:apdx_dataset}
PROTEINS, NCI1, NCI109, IMDB-BINARY (IMDB-B) and IMDB-MULTI (IMDB-M) are the five benchmark graph classification datasets used in our experiments. When preprocessing the datasets, we remove all the isolated nodes (i.e., 0-degree nodes) in order to avoid invalid empty message passing of MPNNs.
We present the detailed statistics of the preprocessed datasets in Table \ref{tab:stats}, including the number of graphs, the average/median node numbers, the average/median edge numbers, the dimension of node features, and the number of classes.

\begin{table}[]
    \centering
    \begin{tabular}{c|ccccccc}
    \toprule
        Datasets & graphs & avg nodes & med nodes & avg edges & med edges & feat dim & classes \\
        \midrule
        PROTEINS & 1113 & 39.05 & 26 & 72.82 & 98 & 3 & 2 \\
        NCI1 & 4110 & 29.76 & 27 & 32.30 & 58 & 37 & 2 \\
        NCI109 & 4127 & 29.57 & 26 & 32.13 & 58 & 38 & 2  \\
        IMDB-B & 1000 & 19.77 & 17 & 193.06 & 260 & N/A & 2 \\
        IMDB-M & 1500 & 13.00 & 10 & 131.87 & 144 & N/A & 3 \\
    \bottomrule
    \end{tabular}
    \caption{Detailed statistics of experimented datasets.}
    \label{tab:stats}
\end{table}

\subsection{Backbones}
\label{sec:apdx_backbone}
We adopt two categories of GNN frameworks as the backbones. The first category is the virtual-node-enhanced Message Passing Neural Networks (MPNNs), including vGCN and vGIN. The second category is the Graph Transformer-based networks, including Graphormer and GraphormerGD.
The details of these GNN layers are listed as follows:
\begin{itemize}[leftmargin=0.8cm]
    \item \textbf{vGCN} \cite{kipf2016semi}: Graph Convolution Network is developed from the spectral GNNs with 1-order approximation of Chebychev polynomial expansion. The update of features can be regarded as message passing procedure. The graph convolution operator of each layer in the vGCN is defined as: $\mathbf{X}^{(l+1)} = \sigma (\mathbf{\Tilde{A}} \mathbf{X}^{(l)} \mathbf{W}^{(l)})$, where $\bm{\Tilde{A}}$ is the renormalized adjacency matrix, and $\bm{W}$ is the learnable parameters. We apply ReLU as the activation function $\sigma(\cdot)$.
    \item \textbf{vGIN} \cite{xupowerful}: Graph Isomorphism Network takes the idea from the Weisfeiler-Lehman kernel, and define the feature update of each layer through explicit message passing procedure. The GIN layer takes the form of: $x_v^{(l+1)}=\operatorname{MLP}^{(l+1)}\left(\left(1+\epsilon^{(l+1)}\right) \cdot x_v^{(l)}+\sum_{u \in \mathcal{N}(v)} x_u^{(l)}\right)$, where $\epsilon$ is a learnable scalar. We apply two-layer MLP with ReLU activation in our implementation.
    \item \textbf{Graphormer} \cite{ying2021transformers}: Graphormer adopts the idea of the Transformer \cite{vaswani2017attention} architecture, which designs a multi-head global QKV-attention across all graph nodes, and encode graph topologies as positional encodings (spatial encoding in Graphormers). A Graphormer layer conducts feature updating as follows:
    \begin{equation*}
        \begin{aligned}
        \mathbf{A}^h\left(\mathbf{X}^{(l)}\right) & =\operatorname{softmax}\left(\mathbf{X}^{(l)} \mathbf{W}_Q^{l, h}(\mathbf{X}^{(l)} \mathbf{W}_K^{l, h})^{\top} + \phi^{l,h}(\mathbf{D}) \right); \\
        \hat{\mathbf{X}}^{(l)} & =\mathbf{X}^{(l)}+\sum_{h=1}^H \mathbf{A}^h\left(\mathbf{X}^{(l)}\right) \mathbf{X}^{(l)} \mathbf{W}_V^{l, h} \mathbf{W}_O^{l, h} ; \\
        \mathbf{X}^{(l+1)} & =\hat{\mathbf{X}}^{(l)}+\operatorname{GELU}\left(\hat{\mathbf{X}}^{(l)} \mathbf{W}_1^l\right) \mathbf{W}_2^l,
        \end{aligned}
    \end{equation*}
    where $\phi(\cdot)$ is the spatial encoding module implemented as embedding tables, and $\mathbf{D}$ is the shortest path distance of the graph. $\mathbf{A}$ is the attention matrix.
    \item \textbf{GraphormerGD} \cite{zhang2023rethinking}: GraphormerGD is an enhanced version of Graphormer, where the graph topologies are encoded with a \textit{General Distance} metric defined by the combination of the shortest path distance (SPD) and the resistance distance \cite{klein1993resistance} (RD). It mainly modifies the calculation of the attention matrix $\mathbf{A}$ in Graphormer:
    \begin{equation*}
        \mathbf{A}^h\left(\mathbf{X}^{(l)}\right) = \phi_1^{l,h}(\mathbf{D}_{sp},\mathbf{D}_{r}) \odot \operatorname{softmax}\left(\mathbf{X}^{(l)} \mathbf{W}_Q^{l, h}(\mathbf{X}^{(l)} \mathbf{W}_K^{l, h})^{\top} + \phi_2^{l,h}(\mathbf{D}_{sp},\mathbf{D}_{r}) \right)
    \end{equation*}
    where $\phi(\cdot)$ is an MLP taking SPD embedding and RD gaussian kernel embedding as inputs.
    
\end{itemize}
For all the backbones, we apply the virtual node READOUT approach (which is the official READOUT function of Graphormers) instead of traditional global pooling.
We apply 6 network layers and take 64 as the hidden dimension for all the backbones on all datasets. GraphormerGD has a specical model design on PROTEINS dataset, which applies 5 layers and 32 as the hidden dimension due to the GPU memory limitation.

\subsection{Experimental Environment}
\label{sec:exp_env}
\paragraph{Hardware Environment} The experiments in this work are conducted on two machines: one with 8 Nvidia RTX3090 GPUs and Intel Xeon E5-2680 CPUs, one with 2 Nvidia RTX8000 GPUs and Intel Xeon Gold 6230 CPUs.
\paragraph{Software Environment} Our experiments are implemented with Python 3.9, PyTorch 1.11.0, Deep Graph Library (DGL) \cite{wang2019dgl} 1.0.2, and Python Optimal Transport (POT) \cite{flamary2021pot} 0.8.2. The implementation of all the backbone layers are based on their DGL implementations. For the FGW solver, \mname\xspace uses the official FGW distance solver implemented in POT, and \mname$_*$ uses the accelerated FGW algorithm implemented on our own.

\subsection{Implementation Details}
\label{sec:impl_detail}
For our mixup algorithm \mname, we follow \cite{vincent2022template} and apply the Euclidean distance as the metric measuring the distance of the node features and the graph adjacency matrix as the structural distances between nodes. We sample the mixup weight $\lambda$ from the distribution $\mathrm{Beta}(0.2, 0.2)$ and the trade-off coefficient of the structure and signal costs $\alpha$ are tuned in \{0.05, 0.5, 0.95\}. We set the size of the generated mixup graphs as the weighted average of the source graphs, which is $\Tilde{n} = \lambda n_1 + (1-\lambda) n_2$. The maximum number of iterations of \mname is set to 200 for the outer loop optimizing $\bm{X}$ and $\bm{A}$, and 300 for the inner loop optimizing couplings $\bm{\pi}$. The stopping criteria of our algorithm are the relative update of the optimization objective reaching below a threshold, which is set to 5e-4. For \mname$_*$, we select the step size of MD $\gamma$ from \{0.1, 1, 10\}. The mixup ratio (i.e., the proportion of mixup samples to the original training samples) is set to 0.25.

For the training of GNNs, MPNNs are trained for 400 epochs and Graphormers are trained for 300 epochs, both using AdamW optimizer with a weight decay rate of 5e-4. The batch size of GNNs are chosen from \{32, 128\}, and the learning rate is chosen from \{1e-3, 5e-4, 1e-4\}. Dropout is employed with a fixed dropout rate 0.5 to prevent overfitting. All the hyperparameters are fine-tuned by grid search on validation sets. For a fair comparison, we employ the same set of hyperparameter configurations for all data augmentation methods in each backbone architecture. For more details, please check our code published online at \url{https://github.com/ArthurLeoM/FGWMixup}.

\section{Further Experimental Analyses}

\subsection{Qualitative Analyses}
\label{sec:qual_analysis}

We present two mixup examples of \mname\xspace in Figure \ref{fig:qa} to demonstrate that augmented graphs by \mname\xspace can preserve key topologies of the original graphs and have semantically meaningful node features simultaneously. In Example 1, we can observe that the mixup graph adopts an overall trident structure and a substructure (marked green) from $G_1$, and adopts several substructures from $G_2$ (marked red), finally formulating a new graph sample combined properties from both graphs. In Example 2, we can observe that the mixup graph is quite similar to $G_2$, but breaks the connection of two marked (red arrow pointed) edges and formulates two disconnected subgraphs, which is identical to the overall structure of $G_1$. Moreover, in both examples, we can observe that the preserved substructures are not only topologically alike, but also highly consistent in node features. The two examples demonstrate that \mname\xspace can both preserve key topologies and generate semantically meaningful node features.

\begin{figure}[h]
    \centering
    \caption{Two examples of \mname. In each example, the subfigures on the left and middle are the original graphs to be mixed up, denoted as $G_1$ and $G_2$, respectively. The subtitle denotes the mixup ratio $\lambda$. The subfigure on the right is the synthetic mixup graph. The node features are one-hot encoded and distinguished with the feature ID and corresponding color.}
    \includegraphics[scale=0.4]{./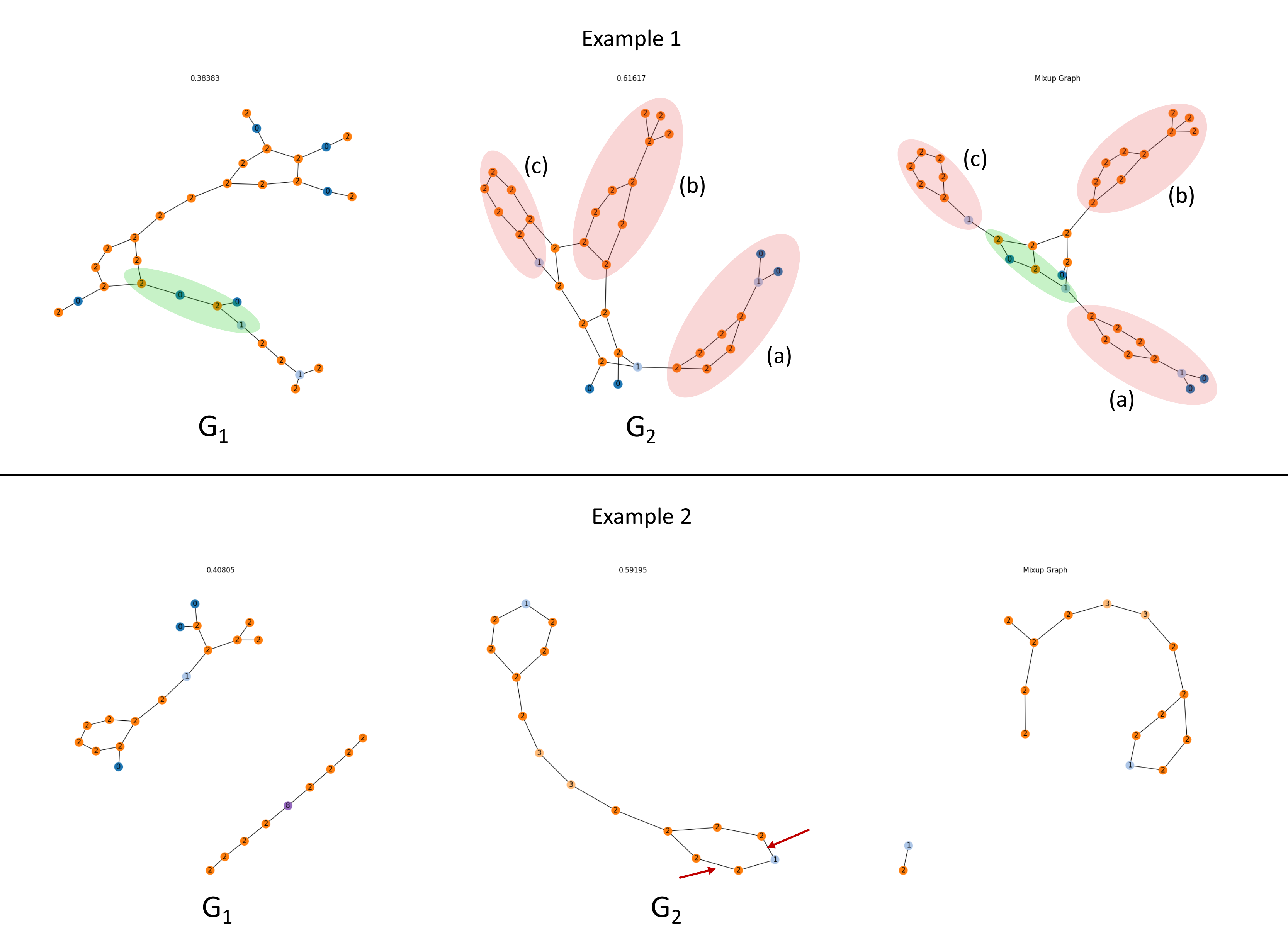}
    \label{fig:qa}
\end{figure}

\subsection{Further Discussions on $\mathcal{G}$-Mixup}
\label{sec:discussion_gmixup}
In this subsection, we conduct further analyses on $\mathcal{G}$-Mixup for a better understanding of its differences from our methods.

\paragraph{$\mathcal{G}$-Mixup with GW Graphon Estimator}
$\mathcal{G}$-Mixup does not originally apply GW metrics to estimate the graphons (as they introduce in Table 1 and 6 in their paper \cite{han2022g}). In our implementation, we select the USVT method as the graphon estimator for $\mathcal{G}$-Mixup. Yet it is also practical to apply GW metric to estimate graphons as an ablation study. Here, we also provide the experimental results of $\mathcal{G}$-Mixup+GW and $\mathcal{G}$-Mixup+GW* (GW with our single-loop solver) on PROTEINS as shown in Table \ref{tab:gmixup-gw}.

\begin{table*}[h]
    \centering
    \begin{tabular}{c|cccc}
    \toprule
         Backbone & $\mathcal{G}$-Mixup & $\mathcal{G}$-Mixup+GW & $\mathcal{G}$-Mixup+GW* & \mname \\
    \midrule
        vGIN & 74.84(2.99) & 74.48(2.54) & 74.03(4.46) & \textbf{75.02(3.86)}\\
        vGCN  & 74.57(2.88) & 74.57(3.18) & 74.84(2.15) & \textbf{76.01(3.19)}\\
    \bottomrule
    \end{tabular}
    \caption{Experimental results of different graphon estimators of $\mathcal{G}$-Mixup on PROTEINS.}
    \label{tab:gmixup-gw}
\end{table*}

No matter what metrics are applied for the graphon estimation, we want to emphasize that $\mathcal{G}$-Mixup does not model the joint distribution of graph structure and signal spaces, but regards them as two disentangled and independent factors. However, our main contribution and the greatest advantage compared with $\mathcal{G}$-Mixup comes from the joint modeling of graph structure and signal spaces. This also explains why \mname\xspace can outperform.

\paragraph{Extend $\mathcal{G}$-Mixup with FGW-based Attributed Graphon Estimator}
In \cite{xu2021learning}, Xu et al. introduce an attributed graphon estimation algorithm, which makes us naturally think of an extended version of $\mathcal{G}$-Mixup. With the attributed graphon, $\mathcal{G}$-Mixup can also consider the joint modeling problem. Though it is not the contribution and the core idea from $\mathcal{G}$-Mixup to address our proposed joint modeling problem, we are also interested in whether this extension can improve the performance of $\mathcal{G}$-Mixup. Hence, we conduct experiments with the extended version of $\mathcal{G}$-Mixup using FGW as the attributed graphon estimator (denoted as $\mathcal{G}$-Mixup+FGW) on PROTEINS and NCI1 datasets, and the results are shown in Table \ref{tab:gmixup-fgw}. 

\begin{table*}[h]
    \centering
    \begin{tabular}{cc|ccc}
    \toprule
         Dataset & Backbone & $\mathcal{G}$-Mixup & $\mathcal{G}$-Mixup+FGW & \mname \\
    \midrule
        \multirow{2}{*}{PROTEINS} & vGIN & 74.84(2.99) & 74.66(3.51) & \textbf{75.02(3.86)}\\
        & vGCN  & 74.57(2.88) & 74.30(2.85) & \textbf{76.01(3.19)}\\
    \midrule
        \multirow{2}{*}{NCI1} & vGIN & 77.79(1.88) & 78.18(1.73) & \textbf{78.37(2.40)}\\
        & vGCN  & 76.42(1.79) & 75.91(1.54) & \textbf{78.32(2.65)}\\
    \bottomrule
    \end{tabular}
    \caption{Experimental results of $\mathcal{G}$-Mixup extended with FGW attributed graphon estimator on PROTEINS and NCI1.}
    \label{tab:gmixup-fgw}
\end{table*}

We can observe that FGW does not significantly improve the performance of $\mathcal{G}$-Mixup and still cannot outperform our methods. The main reason lies in that the node matching problem remains unsolved using the linear interpolation strategy of two graphons introduced in $\mathcal{G}$-Mixup. Though the intra-class node matching has been done with FGW graphon estimation, the graphons of different classes are not ensured with an aligned node distribution, which requires the inter-class node matching for an appropriate mixup. This is the core reason for the limited performance of $\mathcal{G}$-Mixup series methods.

\subsection{Algorithm Efficiency}
\label{sec:alg_eff}
In this subsection, we provide additional details regarding the runtime efficiency analyses. 

\paragraph{Further Comparisons between \mname\xspace and \mname$_*$}
Table \ref{tab:supp_speed} presents the average time spent on a single iteration of $\bm{\pi}$ update (i.e, FGW solver convergence) and the average iterations taken for updating $\bm{X}$ and $\bm{A}$ until convergence.
We can observe that the convergence of a single FGW procedure does not appear to be significantly accelerated. This is because the gradient step has been taken twice for an alternating projection process, which incurs double time for the gradient calculation (with a complexity of $O(n^3)$, $n$ is the graph size). In contrast, the strict solver only takes one gradient step for each projection.
Therefore, despite a faster convergence rate of the relaxed FGW, the extra gradient step makes the overall convergence time of the relaxed FGW solver similar to that of the strict FGW solver.
However, a notable improvement is observed in the convergence rate of the outer loop responsible for updating $\bm{X}$ and $\bm{A}$, resulting from the larger step of $\bm{\pi}$ that the relaxed FGW provides, as shown in Prop.\ref{theo:converge}. 
Considering the two factors above, the overall mixup time has been efficiently decreased by up to 3.46 $\times$ with \mname$_*$.

\begin{table*}[h]
    \centering
    \begin{tabular}{c|cc|ccc}
    \toprule
          \multirow{2}{*}{Dataset} & \multicolumn{2}{c|}{Single FGW Iter. Time (s)} & \multicolumn{3}{c}{Avg. Outer Loop Converge Iter. (\#)} \\
           & \mname$_*$ & \mname & \mname$_*$ & \mname & Speedup\\
    \midrule
        PROTEINS  & \textbf{0.0181} & 0.0225 & \textbf{80.64} & 153.69 & 1.91$\times$ \\
        NCI1   & 0.0140& \textbf{0.0120} & \textbf{54.25}& 170.23 & 3.14$\times$ \\
        NCI109   & 0.0136 & \textbf{0.0123} & \textbf{53.12} & 169.98 & 3.20$\times$ \\
        IMDB-B  & 0.0162 & \textbf{0.0119} & \textbf{20.03} & 95.48 & 4.77$\times$ \\
        IMDB-M  & 0.0081 & \textbf{0.0060} & \textbf{14.26} & 60.91 & 4.27$\times$ \\
    \bottomrule
    \end{tabular}
    \caption{More algorithm execution efficiency details of \mname\xspace and \mname$_*$.}
    \label{tab:supp_speed}
\end{table*}

\paragraph{Comparisons between Our Methods and Compared Baselines}
We present the averaged efficiencies of different mixup methods and time spent on training vanilla backbones of each fold on PROTEINS and NCI1 datasets in Table \ref{tab:supp_speed2}. As the efficiency of $\mathcal{G}$-Mixup hugely relies on the graphon estimation approach, we also include $\mathcal{G}$-Mixup with GW graphon estimators (denoted as G-Mixup+GW) in the table.

We can observe that our methods and $\mathcal{G}$-Mixup+GW are slower than the other data augmentation methods. The main reason is that the complexity of calculating (F)GW distances between two graphs is cubic ($\mathcal{O}(mn^2+nm^2)$)\cite{peyre2016gromov}, where $m, n$ are the sizes of two graphs. Moreover, when calculating barycenters, we need an outer loop with $T$ iterations and $M$ graphs. In total, the time complexity of mixing up two graphs of size $n$ is $\mathcal{O}(MTn^3)$. \mname$_*$ boosts the efficiency by enhancing the convergence rate and reducing the required iterations $T$ (see Table \ref{tab:supp_speed} for more details), whereas $\mathcal{G}$-Mixup+GW will have to go over the whole dataset to calculate graphons, which is much more time-consuming than \mname.

However, sacrificing complexity to pursue higher performance has been the recent trend of technical development, e.g. GPT-4. Moreover, we believe that the current time complexity of \mname$_*$ is still acceptable compared with our performance improvements, as most compared graph augmentation methods cannot effectively enhance the model performance as shown in Table \ref{tab:results}.

More importantly, in practice, the main computational bottleneck of (F)GW-based method is the OT network flow CPU solver in the current implementation based on the most widely used POT lib. In other words, GPU-based network flow algorithms have not been applied in current computation frameworks. Moreover, mini-batch parallelization is not yet deployed in POT. However, recent works \cite{Sakharnykh} from NVIDIA have focused on accelerating network flow algorithms on GPU, which may probably be equipped on CUDA and allow a huge acceleration for GW solvers in the near future. Hence, we firmly believe that the fact that our method is not yet optimized in parallel on GPUs is only a temporary problem. Just as some other works (e.g., MLP, LSTM, etc.) that have brought enormous contributions in the past era, they are initially slow when proposed, but have become efficient with fast-following hardware supports.

\begin{table*}[]
    \centering
    \resizebox{\linewidth}{!}{
    \begin{tabular}{c|ccccccc}
    \toprule
        {Dataset} & DropEdge & DropNode & $\mathcal{G}$-Mixup & ifMixup & $\mathcal{G}$-Mixup+GW & \mname & \mname$_*$\\
    \midrule
        PROTEINS  & 0.192 & 0.229 & 6.34 & 2.08 & 2523.78 & 802.24 & 394.57 \\
        NCI1   & 0.736 & 0.810 & 10.31 & 5.67 & 9657.48 & 1711.45 & 637.41 \\
    \bottomrule
    \end{tabular}
    }
    \caption{Average mixup efficiency (clock time spent, seconds) of all compared baselines on each fold of PROTEINS and NCI1 datasets.}
    \label{tab:supp_speed2}
\end{table*}

\subsection{Sensitivity Analysis on Varying Beta Distribution Parameter $k$}
\label{sec:vary_k}
We empirically follow the setting in \cite{zhang2017mixup} to select the beta distribution parameter $k=0.2$ where the mixup method is first proposed. We also provide the sensitivity analysis of the Beta distribution parameter $k$ on PROTEINS and NCI1 datasets in Table \ref{tab:vary_k}.

\begin{table*}[h]
    \centering
    \resizebox{\linewidth}{!}{
    \begin{tabular}{c|cccc|cccc}
    \toprule
          \multirow{2}{*}{Methods}& \multicolumn{4}{c|}{PROTEINS} & \multicolumn{4}{c}{NCI1} \\
          &  $k$=0.2 & $k$=0.5 & $k$=1.0 & $k$=2.5 & $k$=0.2 & $k$=0.5 & $k$=1.0 & $k$=2.5\\
    \midrule
        vGIN-\mname & \textbf{75.02(3.86)} & \textbf{75.02(2.67)} & 74.93(2.93) & 74.39(1.07) & \textbf{78.32(2.65)} & 76.37(2.06) & 76.59(2.39) & 77.71(2067)\\
        vGCN-\mname  & \textbf{76.01(3.19)} & 75.47(3.12) & 75.95(2.58) & 74.30(4.12) & \textbf{78.37(2.40)} & 78.00(1.40) & 77.98(1.50) & 77.96(1.73)\\
        vGIN-\mname$_*$  & 75.20(3.30) & 73.59(2.38) & 73.86(2.81) & \textbf{76.10(2.97)} & 77.27(2.71) & 77.25(2.09) & \textbf{77.32(1.78)} & 77.01(2.14) \\
        vGCN-\mname$_*$  & 75.20(3.03) & 74.29(4.62) & \textbf{75.38(3.41)} & 74.21(4.52) & 78.47(1.74) & \textbf{78.71(1.49)} & 	78.10(1.71) & 77.96(1.02) \\
    \bottomrule
    \end{tabular}
    }
    \caption{Experimental results of different $k$ on PROTEINS and NCI1 datasets.}
    \label{tab:vary_k}
\end{table*}

From the results, we can find that $\operatorname{Beta}(0.2, 0.2)$ is the overall best-performed setting. There are also a few circumstances where the other settings outperform. In our opinion, different datasets and backbones prefer different optimal settings of $k$. However, we should choose the one that is overall the best across various settings.

\subsection{Experiments on OGB Datasets}
\label{sec:ogb}
We also conduct experiments on several OGB benchmark datasets\cite{hu2020open}, including ogbg-molhiv and ogbg-molbace. We conduct binary classification (molecular property prediction) on these datasets with AUROC (Area Under Receiver Operating Characteristic) as the reported metric. The split of training, validating, and testing datasets are provided by the OGB benchmark. In spite of the existence of edge features in OGB datasets, in this work, both of our GNN backbones and augmentation methods do not encode or consider the edge features. We select vGCN and vGIN (5 layers and 256 hidden dimensions) as the backbones and train the models five times with different random seeds, and we report the average and standard deviation of AUROCs as the results. The dataset information and the predicting performances are listed in Table \ref{tab:supp_perf}. 

From the table, we can observe evident improvements on both datasets and backbones with our methods. \mname$_*$ obtains the best performance on ogbg-molhiv dataset, with 2.69\% and 3.46\% relative improvements on vGCN and vGIN respectively, and \mname\xspace obtains the best on ogbg-molbace dataset with 1.14\% and 8.86\% relative improvements. Meanwhile, our methods provide a much more stable model performance where we reduce the variance by a significant margin, which demonstrates that our methods can resist the potential noises underlying the data and increase the robustness of GNNs. These additional experimental results further indicate the effectiveness of our methods in improving the performance of GNNs in terms of their generalizability and robustness.

\begin{table*}[]
    \centering
    \begin{tabular}{cccc}
    \toprule
        \multicolumn{2}{c}{\textbf{Stats}} & \textbf{ogbg-molhiv} & \textbf{ogbg-molbace} \\
        \midrule
        \multicolumn{2}{c}{Graphs} & 41,127 & 1,513 \\
        \multicolumn{2}{c}{Avg. Nodes} & 25.5 & 34.1 \\
        \multicolumn{2}{c}{Avg. Edges} & 27.5 & 36.9 \\
        \multicolumn{2}{c}{Feature Dim.} & 9 & 9 \\
    \midrule
    \midrule
          \textbf{Backbones} & \textbf{Methods} & \textbf{ogbg-molhiv} & \textbf{ogbg-molbace} \\
    \midrule
         & vanilla &  73.72(2.45) & 76.62(4.59) \\
         & DropEdge & 74.07(2.68) & 75.31(4.57) \\
         vGCN & $\mathcal{G}$-Mixup & 74.65(1.42) & 69.80(6.41) \\
         & \mname & 75.24(2.78) & \textbf{77.49(2.71)}\\
         & \mname$_*$  & \textbf{75.70(1.15)} & 76.40(2.45) \\
    \midrule
         & vanilla &  72.61(1.01) & 69.77(1.33) \\
         & DropEdge & 72.97(1.61) & 72.68(4.92) \\
         vGIN & $\mathcal{G}$-Mixup & 74.52(1.58) & 73.74(7.07) \\
         & \mname & 72.68(1.47) & \textbf{75.95(2.42)} \\
         & \mname$_*$  & \textbf{75.12(1.02)} & 74.02(2.18) \\
    \bottomrule
    \end{tabular}
    \caption{Statistics and experimental results on OGB benchmark datasets. The reported metrics are AUROCs taking the form of avg.(stddev.).}
    \label{tab:supp_perf}
\end{table*}

\end{document}